\documentclass[compsoc]{IEEEtran}
\usepackage[T1]{fontenc}
\usepackage[
    backend=biber,
    style=ieee,
    sorting=none
]{biblatex}

\usepackage{amsmath,amsfonts, amssymb}
\usepackage{mathrsfs}
\usepackage{nameref}
\usepackage{varioref}
\usepackage{hyperref}
\usepackage{cleveref}
\newtheorem{definition}{\textbf{Definition}}
\newtheorem{theorem}{\textbf{Theorem}}
\newtheorem{proposition}{\textbf{Proposition}}
\usepackage{bm}
\usepackage{multirow} 
\newenvironment{proof}{\textit{Proof.}}{\hfill$\square$}
\usepackage{chngcntr}
\usepackage{apptools}
\AtAppendix{\counterwithin{lemma}{section}}

\DeclareMathOperator*{\argmax}{argmax}

\usepackage{enumitem}
\usepackage{graphicx}
\PassOptionsToPackage{normalem}{ulem}
\usepackage{ulem}
\usepackage[dvipsnames]{xcolor}
\renewcommand{\arraystretch}{1.2}
\definecolor{DarkRed}{RGB}{192, 0, 0}
\definecolor{DarkYellow}{RGB}{220, 159, 18}
\providecolor{added}{rgb}{1,0,0}
\providecolor{deleted}{rgb}{1,0,0}

\usepackage{algorithm}
\usepackage[noend]{algpseudocode}

\usepackage{tikz}
\usepackage{xcolor}

\algrenewcommand\algorithmicforall{\textbf{foreach}}
\algrenewcommand\algorithmicindent{.8em}
\usepackage{pifont}

\usepackage{array}
\usepackage[caption=false,font=normalsize,labelfont=sf,textfont=sf]{subfig}
\usepackage{textcomp}
\usepackage{stfloats}
\usepackage{url}
\usepackage{verbatim}
\usepackage{lipsum}
\usepackage{float}
\usepackage{multicol}
\graphicspath{{./}{../}{../images/}{../biopics/}{images/}{biopics/}}
\renewcommand\thesectiondis{\arabic{section}}
\renewcommand\thesubsectiondis{\thesectiondis.\arabic{subsection}}
\renewcommand\thesubsubsectiondis{\thesubsectiondis.\arabic{subsubsection}}

\begin{document}

\title{Beyond Convolution: Advancing Hypergraph Neural Networks with Hypergraph U-Nets}

\author{Fuli Wang,~\IEEEmembership{Student Member,~IEEE,} Wei Qian, Daniel L Lau,~\IEEEmembership{Fellow, ~IEEE,} \\and Gonzalo R. Arce,~\IEEEmembership{Life Fellow,~IEEE}
\thanks{F. Wang is with the Institute for Financial Services Analytics, University of Delaware, Newark, DE, 19716, USA\\
\indent W. Qian is with the Department of Applied Economics and Statistics, University of Delaware, Newark, DE, 19716, USA\\
\indent D. L. Lau is with the Department of Electrical and Computer Engineering,
University of Kentucky, Lexington, KY 40506 USA\\
\indent G. R. Arce is with the Department of Electrical and Computer Engineering, University of Delaware, Newark, DE, 19716, USA\\

\indent This work was partially supported by the National Science Foundation under grants 1815992, 1816003, and 2413833, the AFOSR award FA9550-22-1-0362 and by the Institute of Financial Services Analytics, co-sponsored by JP Morgan Chase \& Co.}}


\markboth{Journal of \LaTeX\ Class Files,~Vol.~14, No.~8, August~2021}%
{Shell \MakeLowercase{\textit{et al.}}: A Sample Article Using IEEEtran.cls for IEEE Journals}


\maketitle

\begin{abstract}Convolutions have successfully transitioned from image processing to the complex realm of non-Euclidean higher-order domains, particularly in hypergraphs. Despite the success in convolution, the exploration of a popular architecture named U-Net   remains largely unexplored for hypergraph data due to the lack of well-defined pooling and unpooling operations. This work pioneers the study of U-Net architectures for hypergraph data, addressing the critical challenge of designing effective pooling and unpooling operations that retain maximal structural information from the input hypergraph. Motivated by hierarchical clustering, we propose to construct the pooling and unpooling operators all at once by cutting the clustering dendrogram at different granularities, named the Parallel Hierarchical Pooling (PHPool) and Unpooling (PHUnpool) operators. Unlike existing pooling methods that risk local structural damage through a sequential learning procedure, our PHPool operators are designed in a global and parallel manner to ensure fidelity to the original hypergraph structure with efficient computation while the PHUnpool operators are tailored to perform inverse operations of the PHPools for hypergraph reconstruction. We validate our model through hypergraph reconstruction simulation, hypergraph classification, and node-level anomaly detection, where it demonstrates superior performance over existing state-of-the-art graph and hypergraph deep learning methods.

\end{abstract}

\begin{IEEEkeywords}
Geometric deep learning, hypergraph neural networks, hypergraph U-nets, convolution, hierarchical pooling
\end{IEEEkeywords}

\section{Introduction}
\IEEEPARstart{R}{ecently}, there has been a surge of interest in the development of deep learning architectures on network data. Compared to simple graphs that only account for pairwise interactions between two nodes, hypergraph provides a general representation of multi-way relations that are widespread in complex systems~\cite{gao2022hgnn+, hypergraph_review}. For example, human pose estimations with joint parts represented in hypergraph yield coordinated movement~\cite{hao2021hypergraph, liu2020semi}. Recommendation systems with hypergraph modeling co-purchased items lead to a more coherent understanding of user intents~\cite{la2022music, xia2021self}. Gene expression imputation with hypergraphs that depict multi-tissue references imposes superior performance for capturing groups of related genes~\cite{vinas2023hypergraph}. 
\begin{figure}[t!]
  \centering
  \vspace{-0.3cm}
  \includegraphics[width=\linewidth, trim={4.5cm 5.3cm 4cm 7cm}, clip]{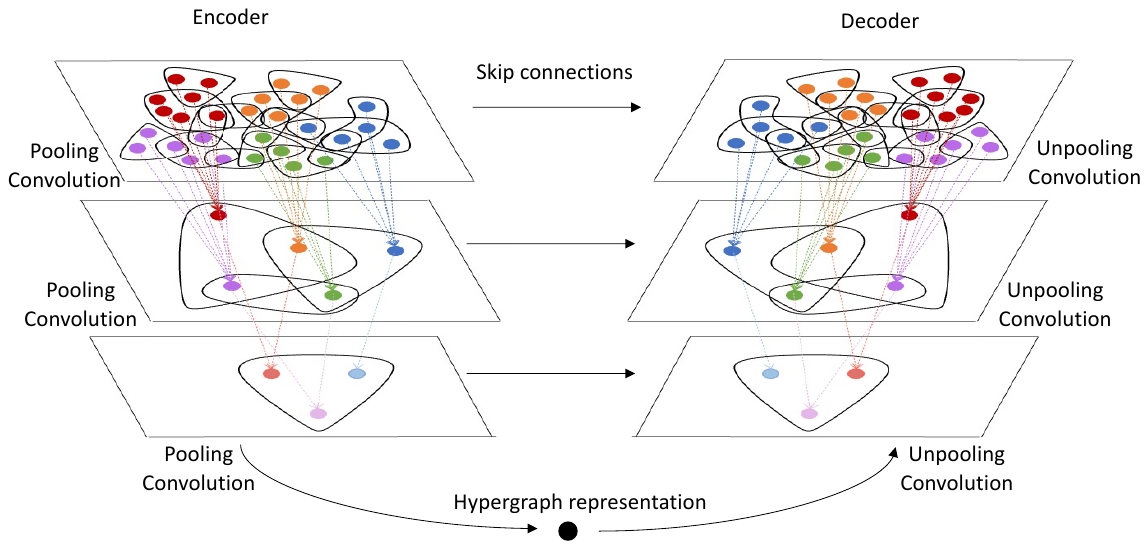}
   \vspace{-0.3cm}
  \caption{Architecture of the proposed HyperGraph U-Net. Each layer in the encoder performs pooling and convolution; the output of the encoder is a ``supernode" representing the overall hypergraph, and it is further fed into a decoder to reconstruct the original hypergraph through convolution and unpooling. To combine high-resolution features from earlier layers, skip connections from corresponding encoder layers are added to decoder layers. Note that the figure ignores signals for simplicity.}
  \label{fig:HGUN}
\end{figure}



With the rise of hypergraph structures, designing deep learning architectures that can effectively process hypergraph data has become a central research focus. Most hypergraph neural networks (HyperGNNs) \cite{HGNN, HCHA, hnhn, huang2021unignn, Allset, wang2024t} follow convolutional or message-passing paradigms, aggregating and transforming features from incident nodes and hyperedges to produce node embeddings. While these representations excel at node-level tasks such as classification, they exhibit critical shortcomings when extended to hypergraph-level objectives that demand a single, global embedding. A flat readout that uniformly pools all node embeddings often washes out vital local and mesoscale structures. For example, summing across communities in a social hypergraph can dilute community-specific signals \cite{ying2018hierarchical, hyperg_isomorphism}, and coarse node-wise pooling in molecular hypergraphs may obscure key bonds and functional substructures. Moreover, even for purely node-level predictions, convolutional HyperGNNs can suffer from an exponentially decaying effective receptive field, which makes deeper stacking of convolutional layers increasingly ineffective~\cite{finder2025improving}. Hierarchical pooling counteracts these issues by coarsening the topology, expanding each node’s receptive field with fewer layers, and preserving multi-scale structure. Empirically, U-Net style graph architectures that interleave pooling and unpooling have achieved state-of-the-art performance on long-range node classification benchmarks, which underscores the value of global context even when supervision is local~\cite{graphunet, vonessen2024next, zhong2023hierarchical}.

Motivated by these insights, we investigate hypergraph pooling both as a means to obtain powerful hypergraph-level embeddings and as a tool to enhance node-level performance. We propose to enhance convolutional HyperGNNs with hierarchical node pooling and unpooling layers, named HyperGraph U-Net (\textbf{HGUN}). As shown in Fig.~\ref{fig:HGUN}, the HGUN follows an encoder-decoder architecture. The encoder takes a hypergraph and its signals as input and then condenses the hypergraph along with node representations, learned by convolution, into a smaller hypergraph. The HGUN encoder outputs a super-node/hypergraph embedding that leverages coarse-grained local structures and can be used to perform hypergraph-level tasks. When node-level embedding is necessary, the HGUN decoder then takes the latent low-dimensional representation of the hypergraph and restores the hypergraph structure through unpooling and convolution. Analogously to the U-Nets in the image~\cite{ronneberger2015u}, the corresponding layer-wise features from the encoder are injected into the decoder through skip connections~\cite{he2016deep}. By varying hypergraph sizes from pooling and unpooling operations, HGUN provides a notion of spatial locality on hypergraphs, amplifies the receptive fields~\cite{ranjan2020asap} of an input hypergraph, leading to better generalization performance~\cite{SEP} and more robust node embeddings.

Specifically, we design the pooling and unpooling operations from a lens of hierarchical clustering on hypergraphs, as the idea of hierarchical node clustering perfectly matches coarse-grained structure exploitation. By a greedy algorithm on selecting the optimal coarse-grained structure from a hierarchical tree, the clustering assignments at different resolution levels are generated all at once, entailing the Parallel Hierarchical Pooling (PHPool) and Unpooling (PHUnpool) operations that bear less risk of information loss. While the proposed PHPool and PHUnpool operations can be combined with any convolutional HyperGNNs, we also introduce a novel HyperGraph CROSS Convolution (HGXConv) layer that models higher-order joint effect through cross-node polynomials. The formulation of HGXConv is motivated by the recent work in tensor HyperGNNs~\cite{wang2024t}, where a cross-node interaction tensor is constructed to model collective node relationships within hyperedges. 

In summary, we highlight our contribution as follows:
\begin{itemize}
    \item We present HyperGraph U-Net (\textbf{HGUN}), a general hypergraph autoencoder framework that is not only effective for node-level tasks but also for hypergraph-level tasks, to the best of our knowledge, the first systematic HyperGNN U-Net framework with explicit hypergraph pooling and unpooling operations.
    \item We introduce Parallel Hierarchical Pooling (\textbf{PHPool}) and Unpooling operations, which select the most promising coarse-grained structures all at once to alleviate topological distortion in sequential processing.
    \item We propose a new HyperGraph Cross Convolution (\textbf{HGXConv}) operation, which propagates node information to hyperedges with cross-node multiplication to capture higher-order mixed effects.
\end{itemize}
The remainder of the paper is organized as follows. We first review related work on hypergraph neural networks in Section II, followed by the preliminaries of hypergraphs in Section III. Then, the proposed hypergraph U-Net architecture is introduced in Section IV, and theoretical analysis is conducted in Section V. Experimental results including simulation and real-world applications are provided in Section VI. We conclude this paper and discuss future work in Section VII.

\section{Related Work}
In this section, we review prior work on hypergraph neural networks (HyperGNNs), hierarchical pooling in U-Nets, and hierarchical clustering on non-Euclidean data.


\subsection{Hypergraph Neural Networks (HyperGNNs)}
A wide variety of HyperGNNs have been proposed in recent years, including methods inspired by hypergraph reductions~\cite{HGNN, HCHA}, two-stage propagation~\cite{hnhn, Allset, huang2021unignn, wu2020comprehensive, gao2022hgnn+} and tensor hypergraph signal processing~\cite{wang2023unified, wang2024t}. Despite different motivations and foundations introduced in these approaches, most of them fall in the framework of convolution~\cite{GCN} or neural message passing~\cite{gilmer2017neural}, where node embeddings are iteratively computed by aggregating node features from their neighborhood. Such a flat aggregation paradigm that keeps node size unchanged has a limitation: it overlooks hierarchical coarse-grained information and so less likely to capture important group-wise properties widely spread in real-world applications~\cite{fortunato2016community}. As a response, we propose pooling and unpooling layers by leveraging hierarchical clustering for hypergraphs and further strengthening cross-node message passing within hyperedge groups. 
\begin{figure}[t!]
  \centering
  \includegraphics[width=0.95\linewidth]{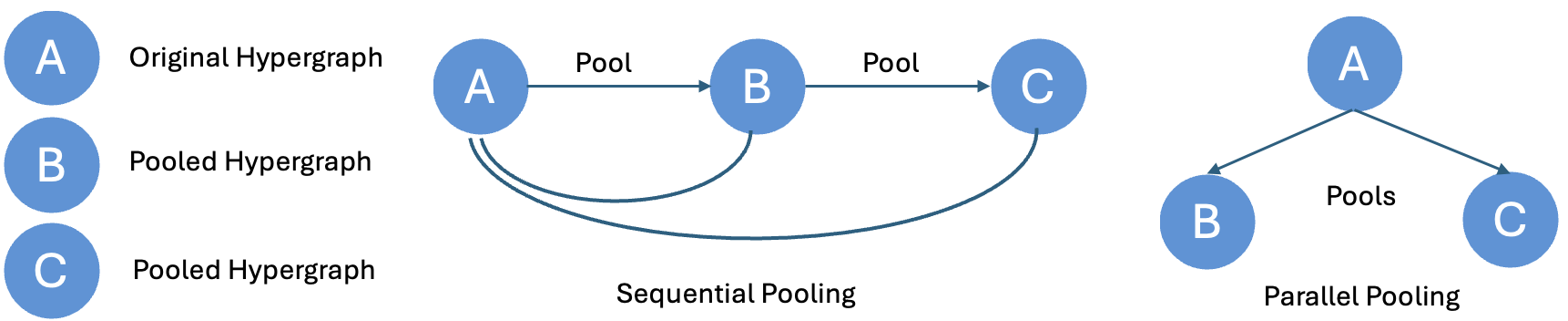}
  \caption{Comparison between sequential pooling and parallel pooling. Parallel pooling preserves at least as much information as sequential pooling under the assumptions of Proposition~\ref{prop:info_loss_compare} in Section~\ref{subsec:mutual_info}.}
  \label{fig:data_processing}
\end{figure}

\subsection{Hierarchical Pooling in Graph U-Nets}
The basic building blocks in graph u-nets are pooling and convolution.  Here, we focus on pooling layers where hierarchical node pooling methods aim to gradually coarsen the network structure into a smaller-sized graph while preserving the original structural information. Hierarchical node pooling can be classified into node drop pooling and node cluster pooling. As suggested by these two names, node drop pooling methods (TopKPool~\cite{graphunet}, SAGPool~\cite{lee2019self}, SODPool~\cite{wang2020second}, etc.) select the most important nodes by learning a score function and discard unselected nodes. Though efficient, node-drop pooling would result in information loss and isolated subgraphs, which could damage graph local structures and cause sparse connectivities, especially after stacking several layers. Thus, a more common design is proposed based on the clustering of nodes. DiffPool~\cite{ying2018hierarchical} learns a cluster assignment (i.e., pooling mapping) from graph neural networks, MinCutPool~\cite{bianchi2020spectral} relaxes the constraints for spectral clustering and designs a clustering-based loss function to enforce ideal clusters, and HoscPool~\cite{duval2022higher} further extends the spectral clustering loss function to edge and triangle motifs. Another line of work~\cite{wang2020haar, eliasof2023haar} draws inspiration from the compressive Haar transform to filter out less important node features. 

Although mitigating information loss, most node drop and node cluster pooling methods employ a sequential pooling procedure, which could excessively compress information as the depth of the neural network increases. With the conceptual difference illustrated in Fig.~\ref{fig:data_processing}, the mutual information between the original hypergraph and the pooled hypergraphs in a sequential data processing pipeline tends to be less than that in a parallel pooling procedure (see Proposition~\ref{prop:info_loss_compare} in subsection~\ref{subsec:mutual_info}). Consequently, we propose a method to simultaneously learn the clustering assignments for all layers by selecting appropriate clusters from a hierarchical clustering dendrogram. Such parallel pooling reduces information loss compared to sequential processing because it captures more of the original data's information by directly deriving multiple outputs from the source without intermediate steps that could degrade the information content. These advantages will be demonstrated in the experiments section.

\subsection{Hierarchical Clustering on Non-Euclidean Data}
A graph hierarchical clustering algorithm takes as input a graph structure, and outputs a hierarchy of clusters represented in a tree or dendrogram as shown in Fig.~\ref{fig:hierarchy_intro} (right). Most hierarchical clustering algorithms are bottom-up methods. They start with the trivial clustering where each node itself forms a cluster. At coarser resolution levels, clusters are gradually merged to create larger clusters and eventually produce the trivial cluster that includes all nodes. In the dendrogram, the horizontal axis denotes the nodes and the vertical axis denotes the distance measured between clusters. For example, in Fig.~\ref{fig:hierarchy_intro}, the corresponding node colors represent the cluster assignment obtained at distance 3, where the purple nodes are merged to form a larger cluster due to the shared triangular structures. A variety of distance measures are introduced in the literature, including the minimum, the maximum, the average distance between any pair of nodes between two clusters, and many others. Most hierarchical clustering research focuses on reducing the time complexity. Recently, some algorithms~\cite{dhulipala2021hierarchical,dhulipala2023terahac} have been shown to obtain linear time complexity with respect to the number of nodes, making hierarchical clustering a practically scalable approach to many problems.

\begin{figure}[t!]
  \centering
  \includegraphics[width=\linewidth]{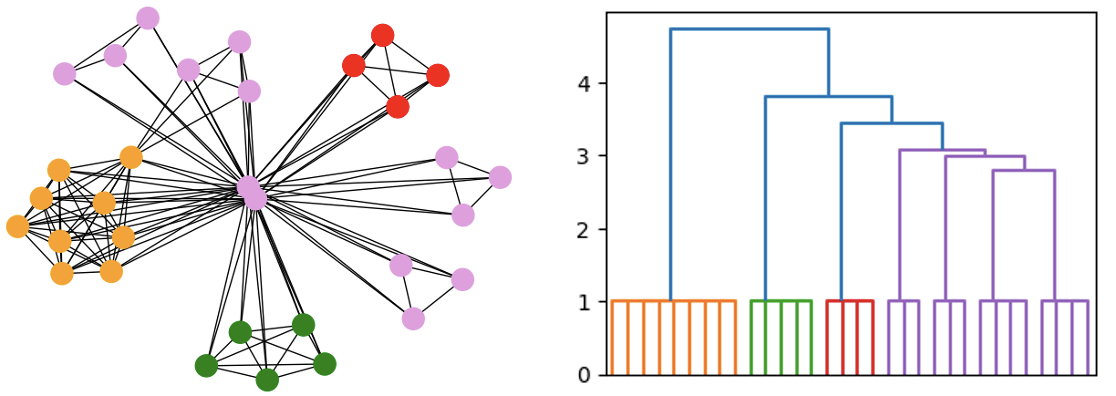}
  \caption{A ego-network graph (left) and its hierarchical clustering dendrogram (right). The node colors correspond to the cluster labels obtained at resolution level $3$, where $4$ clusters (orange, green, red, purple) are formed.}
  \label{fig:hierarchy_intro}
\end{figure}

To the best of our knowledge, hypergraph clustering has only been studied using flat clustering algorithms. Two branches of flat hypergraph clustering are spectral clustering~\cite{zhou2004regularization, CE_learning, li2017inhomogeneous} and modularity-based clustering~\cite{kumar2020new, austin_modularity, feng2023modularity}. The essence of spectral clustering is to learn the spectrum of the hypergraph Laplacian, which might group nodes with isomorphic topology together even if they are not connected. Therefore, we focus on modularity-based clustering, which aims to maximize the density of links inside clusters compared to links between communities, generating more realistic clusters that capture local community structures. 

Different from existing work, we first study hierarchical clustering for hypergraph data, and apply the clustering technique to perform hypergraph pooling operations.

\section{Preliminary on Hypergraphs}

{\bf Notation.} A hypergraph $\mathcal{G}$ is defined as a pair of two sets $\mathcal{G} = (\mathcal{V}, \mathcal{E})$, where $\mathcal{V} = \{v_1, v_2, ..., v_N\}$ denotes the set of $N$ nodes (or vertices) and $\mathcal{E} = \{e_1, e_2, ..., e_K\}$ is the set of $K$ hyperedges whose elements $e_k$ ($k = 1, 2,...,K$) are non-empty subsets of $\mathcal{V}$. The incidence matrix of a hypergraph $\mathbf{H}\in \{0, 1\}^{N\times K}$ can be used to represent the hypergraph topology, where $[\mathbf{H}]_{ve} = 1$ if node $v$ is contained in the hyperedge $e$ and zero otherwise. Apart from the hypergraph structure, there are also features $\mathbf{f}_v \in \mathbb{R}^D$ associated with each node $v\in\mathcal V$ and features  $\mathbf{f}_e \in \mathbb{R}^F$ for each hyperedge $e \in \mathcal{E}$, which are used as row vectors to construct the vertex feature matrix $\mathbf{F}_{\mathcal{V}}\in \mathbb{R}^{N\times D}$, and hyperedge feature matrix $\mathbf{F}_{\mathcal{E}}\in \mathbb{R}^{K\times F}$. The degree of a node $d(v)$ in a hypergraph is defined as the number of hyperedges that include the node. Similarly, the degree of a hyperedge $d(e)$ in a hypergraph is defined as the number of nodes included by the hyperedge.
The degrees are contained in the diagonal node degree matrix $\mathbf{D}_{\mathcal{V}} \in \mathbb{R}^{N\times N}$ and hyperedge degree matrix $\mathbf{D}_{\mathcal{E}} \in \mathbb{R}^{K\times K}$. 

{\bf Modularity.} We use hypergraph modularity to guide our selection of clusters from a hierarchical dendrogram. The idea of modularity is to compare the number of connections within the cluster with its expected value from a null model. Higher modularity indicates more dense connections within clusters, which is considered a desirable clustering assignment. As established in work~\cite{kumar2020new}, the node-degree-preserving adjacency matrix $\mathbf{A}^{hyp} = \mathbf{H}(\mathbf{D}_{\mathcal{E}} - \mathbf{I})^{-1}\mathbf{H}^{T}$ is proposed to compute hypergraph modularity. The reason for defining the adjacency matrix with a scaling factor $(d(e) - 1)$ is to preserve node degrees as in the original hypergraph.

The node connections underlying in a null model are denoted as $[\mathbf{P}^{hyp}]_{uv} = \frac{d(u) d(v)}{d_\mathcal{V}}$, with $d_\mathcal{V} = \sum_{u\in \mathcal{V}} d(u) $ being the sum of node degrees, which computes how likely two nodes $u$ and $v$ were connected if we randomly construct edges between them proportional to the degree of nodes. The expression of hypergraph modularity is then given by
\begin{equation}\label{eq:mod}
    Q^{hyp}(\mathbf{H}, \mathbf{S}) = \frac{1}{d_{\mathcal{V}}} \sum_{uv} ([\mathbf{A}^{hyp}]_{uv} -[\mathbf{P}^{hyp}]_{uv}) \delta(s_u, s_v),
\end{equation}
where \(\mathbf{S}\in \mathbb{R}^{N \times n_{\text{cluster}}}\) is a binary cluster assignment matrix, and \(s_u\) denote the assigned cluster of node \(u\). 
 $\delta(s_u, s_v)$ is an indicator variable, $\delta(s_u, s_v)=1$ if nodes $u$ and $v$ are assigned in a same cluster and zero otherwise. If two nodes are in the same cluster and are connected, the difference $[\mathbf{A}^{hyp}]_{uv} -[\mathbf{P}^{hyp}]_{uv}$ will be positive, contributing to higher modularity, and vice versa. Consequently, a higher modularity means that there are more connections within clusters and thus fewer connections between clusters as the number of connections is fixed.  The notations used in this paper are summarized in Table~\ref{table:notation}.

{\bf Problem formulation.} The goal of hypergraph neural networks is to learn a mapping $f: (\mathcal{G}, \mathbf{F}_{\mathcal{V}}) \to \mathbf{Y} \in \mathbb{R}^{M\times c}$, where $c$ denotes the number of label classes. The value of $M$ can be the number of nodes for the node-level task or just $1$ for the hypergraph-level task. From an encoder-decoder perspective, the HGUN has the following signature:
\begin{equation}
    \begin{cases}
        &\text{Enc}: (\mathcal{G}, \mathbf{F}_{\mathcal{V}}) \to (\mathcal{G}^P,\mathbf{F}_{\mathcal{V}}^P),  \\
        &\text{Dec}: (\mathcal{G}^P,\mathbf{F}_{\mathcal{V}}^P) \to (\mathcal{G}, \mathbf{Y}), \text{where } \mathbf{Y} \in \mathbb{R}^{M\times c},
    \end{cases}
\end{equation}
where the encoder compresses the original hypergraph and node features to a latent space such that the pooled hypergraph $\mathcal{G}^P$ has less number of nodes $N_P < N$. Note that the condensed hypergraph $\mathcal{G}^P$ does not necessarily have fewer hyperedges because the hyperedges in the pooled hypergraph are constructed by exploring higher-order connections among nodes (i.e. cliques), which are data dependent. The pooled hypergraph is associated with its pooled node signals $\mathbf{F}_{\mathcal{V}}^P\in \mathbb{R}^{N_P \times h}$ with hidden dimension $h$. Taking the tuple of $(\mathcal{G}^P, \mathbf{F}_{\mathcal{V}}^P)$ as input, the decoder then recovers the original hypergraph $\mathcal{G}$ while constructing new output features $\mathbf{Y}$. 

We define the number of layers in HGUN as the number of HGXConv $+$ PHPool/ PH-UnPool blocks. For example, the HGUN shown in Fig.~\ref{fig:HGUN} consists of $3$ layers. Note that the hypergraph topologies in HGUN encoder and decoder are identical at the same layer, which acts as a backbone to support the reconstruction of the output features~\cite{SEP, graphunet}. Allowing for different hypergraph structures in encoder and decoder would transition our model into a generative framework, which is beyond the scope of this paper. For further exploration of models that adapt and refine hypergraph structures within neural networks, we direct interested readers to the work described in~\cite{hypergraph_structure_learning}.

\begin{table}[b!]
    \centering
    \renewcommand{\arraystretch}{1.2} 
    \caption{Notation Table}
    \begin{tabular}{c|l}
        \hline
        \textbf{Symbol} & \textbf{Description} \\
        \hline
        $\mathcal{G} $ & Hypergraph structure $\mathcal{G}$ \\
        $\mathcal{V}$ & Vertex set of a hypergraph \\
        $\mathcal{E}$ & Hyperedge set of a hypergraph \\
        $N$ & Number of nodes in a hypergraph, i.e., $|\mathcal{V}|$ \\
        $K$ & Number of hyperedges in a hypergraph, i.e., $|\mathcal{E}|$  \\
        $\mathbf{H}$ & Hypergraph incidence matrix \\
        $\mathbf{A}$ & Hypergraph adjacency matrix, $\mathbf{HH}^T$\\
        $\mathbf{A}^{hyp}$ & Node-degree-preserving adjacency matrix, $\mathbf{H}(\mathbf{D}_{\mathcal{E}} - \mathbf{I}) \mathbf{H}^T$\\
        $\mathbf{D}_{\mathcal{V}}$ & Diagonal node degree matrix\\
        $\mathbf{D}_{\mathcal{E}}$ & Diagonal hyperedge degree matrix\\
        $d_{\mathcal{V}}$ & Sum of node degrees\\
        $(\cdot)^{(l)}$ & Variable at the $l^{\rm th}$ layer of hypergraph U-Nets\\
        $\mathbf{F}_{\mathcal{V}}$ & Node feature matrix\\
        $\mathbf{F}_{\mathcal{E}}$ & Hyperedge feature matrix\\
        $\mathbf{Y}$ & Output embedding for downstream tasks\\
        $\mathbf{f}_v$ & Feature vector associated with node $v$\\
        $\mathbf{f}_e$ & Embedding vector for hyperedge $e$ \\
        $\mathbf{S}$ & Clustering assignment matrix for node pooling\\
        $\mathbf{W}$ & Learnable weight matrix in a neural network\\
        $\mathbf{b}$ & Learnable bias matrix in a neural network\\
        $\sigma$ & Activation function for a neural network\\
        \hline
    \end{tabular}\label{table:notation}
\end{table}

\section{Hypergraph U-Nets}
We introduce the HyperGraph U-Net (HGUN) architecture, which consists of three major components: Parallel Hierarchical Pooling (PHPool), Unpooling (PHUnpool), and HyperGraph Cross-Convolution (HGXConv). 
\begin{figure*}[ht]
  \centering
  \includegraphics[width=\linewidth]{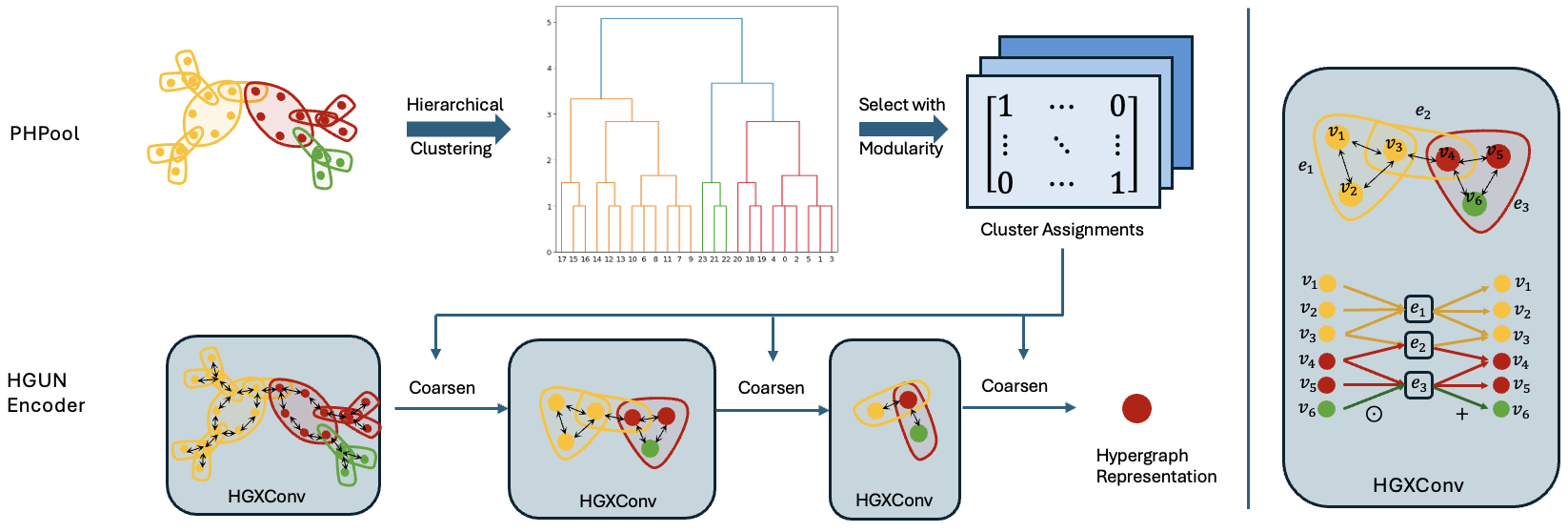}
  \caption{The overall architecture of the proposed PHPool operator combined with HGXConv to formulate the HGUN Encoder. The clustering assignments of PHPool come from a flattened hierarchical dendrogram and are injected to perform coarsening after each HGXConv layer. The two-stage propagation rule of HGXConv is displayed on the right.}
  \label{fig:overall_architecture}
\end{figure*}
Without losing generality, we present the overall HGUN encoder architecture in Fig.~\ref{fig:overall_architecture} as the decoder is constructed by replacing PHPool with PHUnpool to enlarge a hypergraph. As shown in Fig.~\ref{fig:overall_architecture}, PHPool first uses a hierarchical clustering algorithm that takes a hypergraph as input and outputs a hierarchical dendrogram, capturing coarse-grained structures of hypergraphs. To obtain reasonable and meaningful cluster assignments, we greedily select clusters from the dendrogram under the guidance of modularity. These selected cluster assignments are injected into the HGUN encoder to coarsen corresponding hypergraphs. The detailed description of PHPool is given in subsection~\ref{subsec:phpool}. The HGXConv, furthermore, aggregates neighboring information by modeling higher-order interactions via cross-node multiplication, which is presented in subsection~\ref{subsec:HGXConv}.

\subsection{Parallel Hierarchical Pooling (PHPool)}\label{subsec:phpool}

Consider node-pooling as a node-grouping problem, where each pooling layer maps a hypergraph into a condensed hypergraph with fewer nodes. Let $ \mathbf{S}^{(l)}\in \{0, 1\}^{N_l \times N_{l+1}}$ be a binary cluster assignment matrix that represents the pooling relationship between two consecutive layers, where $N_l > N_{l+1}$. Formally, the pooling procedure can be described as two steps: 
\begin{equation}
    \begin{cases}
        \text{Cluster Assignments: } \mathcal{G} \to \mathcal{S}=\{\mathbf{S}^{(1)}, ..., \mathbf{S}^{(L)}\};\\
         \text{Coarsening: } (\mathcal{G}^{(l)}, \mathbf{F}_{\mathcal{V}}^{(l)};\mathbf{S}^{(l+1)}) \to (\mathcal{G}^{(l+1)}, \mathbf{F}_{\mathcal{V}}^{(l+1)}),
    \end{cases}
\end{equation}
where $l=0, 1, ..., L$ and $(\mathcal{G}^{(0)}, \mathbf{F}_{\mathcal{V}}^{(0)})$ is the original hypergraph and features. The first step takes a hypergraph as input and generates a series of clustering assignment matrices $\mathbf{S}^{(l)}$ that map the nodes between two consecutive layers. $[\mathbf{S}^{(l)}]_{ij}$ denotes whether the $i$-th node in the current hypergraph belongs to the $j$-th cluster, that is, the $j$-th node in the pooled hypergraph. The coarsening step can be seen as reducing the hypergraph tuple $(\mathcal{G}^{(l)}, \mathbf{F}_{\mathcal{V}}^{(l)})$ to a condensed hypergraph with fewer nodes. The cluster assignment step is achieved all at once and thus can be separated as a pre-processing step to speed up the training procedure. We describe these two steps in the following subsections.

\begin{figure*}[ht]
  \centering
  \includegraphics[width=\linewidth]{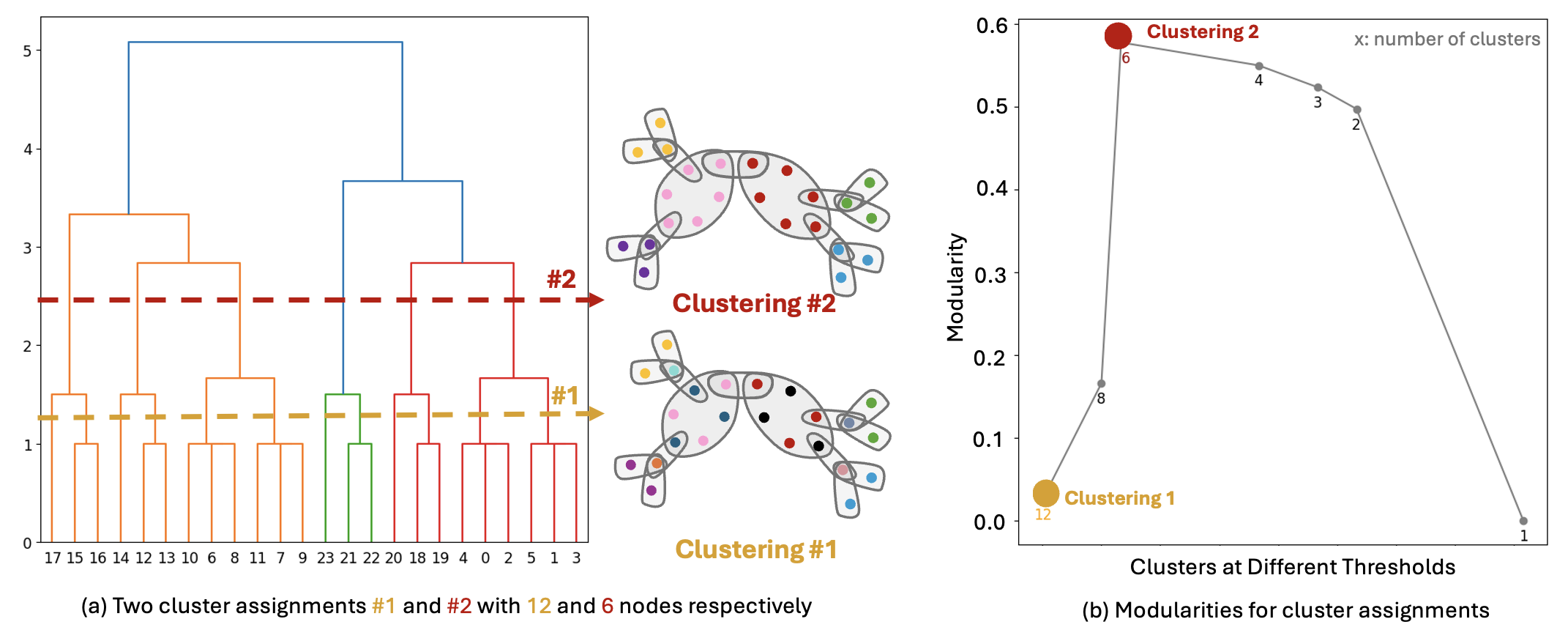}
  \vspace{-0.7cm}
  \caption{(a) Two resulting cluster assignments from the hierarchical dendrogram for a molecule hypergraph. Node colors indicate cluster assignments. (b) Modularities for all unique clustering assignments obtained from the dendrogram.}
  \label{fig:hier_pool}
\end{figure*}
\subsubsection{Cluster Assignments Generation}
\label{subsec:clustering}
As illustrated in Fig.~\ref{fig:overall_architecture} (top), a hypergraph is first fed into the hierarchical algorithm~\cite{mullner2011modern, bar2001fast} to produce a dendrogram that represents coarse-grained structures of a hypergraph from bottom to top. Among many cluster assignments that can be retrieved from the hierarchical dendrogram, we greedily select the optimal granularity thresholds based on hypergraph modularity and generate a series of cluster assignments. The detailed procedure is described as follows.

{\bf (a) Fit hierarchical dendrogram for hypergraphs.}
To apply hierarchical clustering to hypergraphs, we first transform the hypergraph into a metric space by computing shortest-path distances between all pairs of nodes using Dijkstra’s algorithm~\cite{dijkstra2022note}. The resulting distance matrix serves as input to an agglomerative clustering procedure, which iteratively merges nodes and clusters to produce a dendrogram (Fig.~\ref{fig:overall_architecture}). This dendrogram compactly represents the nested community structure of the hypergraph across multiple scales.
Each horizontal cut of the dendrogram corresponds to a particular resolution of the hypergraph, yielding a clustering assignment of the nodes. By varying the cut threshold, we obtain a family of progressively coarser partitions. These multiresolution clusterings form the basis of our pooling and unpooling operators: finer cuts preserve detailed local structures, while coarser cuts emphasize broader community organization.

{\bf (b) Hypergraph modularity-guided threshold selection.} With the agglomerative dendrogram, we then need to choose $L$ cluster assignments, which correspond to $L$ pooling (or unpooling) layers. A straightforward way to select cluster assignments is by specifying the number of clusters, which is equivalent to defining a pooling ratio. For example, if a pooling ratio is $0.5$ for $L=3$ layers, the number of nodes will be $[N, 0.5N, 0.25N, 1]$. This method works well on large hypergraphs that exhibit dense and wide community structures but could be problematic on small hypergraphs such as molecules. For example, the molecule in Fig.~\ref{fig:hier_pool} contains $24$ nodes, and its two cluster assignments (\textcolor{DarkYellow}{\#1} and \textcolor{DarkRed}{\#2}) yield \textcolor{DarkYellow}{$12$} and \textcolor{DarkRed}{$6$} nodes, respectively. Clustering \textcolor{DarkRed}{\#2} preserves important local bond structures, whereas cluster assignment \textcolor{DarkYellow}{\#1} is overly detailed, resulting in topological distortion and less meaningful structures. To quantify clustering performance and select optimal thresholds for a hypergraph, we use hypergraph modularity~\cite{kumar2020new} as the evaluation metric. Hypergraph modularity measures the density of connections within clusters, favoring assignments with more connected hyperedges within clusters. As shown in Fig.~\ref{fig:hier_pool} (b), the curve describes the modularity for all unique cluster assignments obtained from the hierarchical dendrogram. Consistent with expectations, cluster assignment  \textcolor{DarkRed}{\#2} demonstrates higher modularity than clustering \textcolor{DarkYellow}{\#1}, validating the effectiveness of modularity as a clustering metric for selecting meaningful cluster assignments.

Embracing modularity, we propose a greedy algorithm (Algorithm~\ref{algo: flat}) to select the optimal cluster assignments from the constructed hierarchical dendrogram. After evaluating all clustering assignments using modularity, we iteratively select the cluster assignments with the maximum modularity until $L$ grouping layers (ie $L-1$ cluster assignments) are generated. Note that we only need $L-1$ cluster assignments for $L$ pooling layers because the last pooling layer is as trivial as clustering all nodes into a single cluster to generate a holistic hypergraph representation. As elaborated in Algorithm~\ref{algo: flat}, it takes input of the incidence matrix $\mathbf{H}$ of a hypergraph, the linkage matrix $\mathbf{Z}$ and the number of desired grouping layers $L$, and produces a list of cluster assignments $C^{*}$ representing the optimal granularities for an underlying hypergraph.

The algorithm proceeds as follows. The third column of the linkage matrix, denoted by $\mathbf{Z}[:, 3]$, contains the distance measures for all cluster merges. We begin by retrieving the distinct distances $T = \mathbf{Z}[:, 3]$. In the first step, we iterate through these distances, flatten the dendrogram to obtain cluster labels $c(t)$, calculate the corresponding modularity via Eq.~\eqref{eq:mod}, and store the results in a list $Q$. The second stage optimizes the thresholds by selecting those that maximize modularity. Specifically, a greedy approach is employed to first identify the threshold with the highest modularity, then perform a rightward search, ignoring prior modularities. The rightward search is preferred because it constructs hypergraphs from fine-grained to coarse-grained levels, as only this direction produces coarser hypergraphs. Additionally, empirical observations, such as the modularity curve in Fig.~\ref{fig:hier_pool} (right), show that modularity is typically a concave function, with the right side of the peak being smoother than the left. 

The greedy selection process operates in a loop, terminating when either the layer counter $l$ reaches the desired number of layers $L$ or when all distance measures are exhausted. In each iteration, the index of the optimal threshold is identified by maximizing the modularities to the right of the current cursor (i.e., $idx^{*}\!\leftarrow\!\argmax_{j\geq idx} Q[j]$). The modularities up to and including this selected index $idx^*$ are then set to negative infinity, and the cursor is advanced to $idx^{*}+1$ so that the next search continues strictly to the right. The detailed procedure is described in Step 2 of Algorithm~\ref{algo: flat}.

\begin{algorithm}
\caption{Determine Optimal Thresholds via Modularity Maximization}\label{algo: flat} 
\textbf{Input:} The incidence matrix $\mathbf{H}$; The linkage matrix $\mathbf{Z}$ representing a dendrogram; the number of layers $L$; \\
\textbf{Output:} A list of optimal clusters $C^{*}$ for a dendrogram.
\begin{algorithmic}[1]
\State Retrieve all unique distance measure $T = \mathtt{unique}(\mathbf{Z}[:, 3])$ 
\State Initialize an empty list $Q$ to hold modularities
\State //\textit{Step 1: Compute modularities}
\For{each threshold $t$}
    \State Obtain cluster labels $c(t)\in \mathbb{R}^N$ by cutting the dendrogram $\mathbf{Z}$ at threshold $t$
    \State $Q \leftarrow$ $\mathtt{Add}(\mathtt{Modularity}(\mathbf{H}, c(t)))$ according to Eq.~\eqref{eq:mod}
\EndFor
\State //\textit{Step 2: Choose optimal clusters based on modularities }
\State Declare an empty list $C^{*}$ for the optimal clusters
\State Initialize layer counter $l=0$
\State Initialize counter $idx =0$ for iterating distance measure
\State {\bf While} $l < L$ {\bf do}
\State \quad  {\bf If} $idx>=\mathtt{length}(T)$ {\bf do}
\State \quad\quad break
\State \quad Select the optimal threshold index $idx^{*}\!\leftarrow \argmax_{j\geq idx} Q[j]$
\State \quad Add selected cluster $C^{*}\leftarrow \mathtt{Add}(c(T[idx^{*}]))$
\State \quad Mute the prior modularities $Q[:idx^{*}+1]\leftarrow -\infty$
\State \quad Update index $idx \leftarrow idx^{*}+1$
\State  \quad  $l = l+ 1$
\end{algorithmic}
\end{algorithm}

The output cluster assignments $C^*$ from Algorithm~\ref{algo: flat} are labeled based on the original hypergraph, where each cluster label in $C^*$ is a length-$N$ vector, indicating clusters at different resolution levels. To perform sequential coarsening as shown in Fig.~\ref{fig:overall_architecture} (left), mappings between two consecutive layers are needed. We then iteratively examine parent-child relationships among clusters. For each cluster in the $l$-th layer, we identify its members, check the uniformity of their parental cluster assignments from the previous layer, and record a tuple (parent id, child id) that defines the mapping between consecutive granularity levels.




\subsubsection{Higher-order Relationships Augmented Coarsen}

\par 
Note that the tuple (parent id, child id) obtained from PH-Pool above is a sparse representation of the cluster assignment matrix $\mathbf{S}^{(l)} \in \mathbb{R}^{N_l \times N_{l+1}}$. For consistency, we use notation $\mathbf{S}^{(l)}$ to represent the hierarchical mapping between any two consecutive blocks. A new feature matrix and hypergraph structure are then generated by a higher-order relationship augmented coarsening. In particular, for $l=0, 1,...,L$, we apply the following steps
\begin{subequations}\label{eq:PHPool}
    \begin{align}
    &\mathbf{F}_{\mathcal{V}}^{(l+1)} = \mathbf{S}^{(l)^T} \mathbf{F}_{\mathcal{V}}^{(l)} \in \mathbb{R}^{N_{l+1}\times h}, N_{l+1} < N_{l};\label{eq:node_coarse}\\
    &\mathbf{A}^{(l+1)} = \mathbf{S}^{(l)^T} \mathbf{A}^{(l)} \mathbf{S}^{(l)}\in \mathbb{R}^{N_{l+1}\times N_{l+1}}; \label{eq:adj_coarse}\\
   & \mathcal{G}^{(l+1)} = \mathtt{FindCliques}(\mathbf{A}^{(l+1)}).\label{eq:hyperg_coarse}
\end{align}
\end{subequations}
Eq.~\eqref{eq:node_coarse} generates new node embeddings by aggregating current node embeddings in the same cluster.
Next, taking the hypergraph adjacency matrix representation, Eq.~\eqref{eq:adj_coarse} connects new nodes (clusters) if their parent nodes are connected in the previous layer, Eq.~\eqref{eq:hyperg_coarse} further constructs hyperedges using cliques in the coarsened hypergraph adjacency matrix. Eq.~\eqref{eq:adj_coarse} and Eq.~\eqref{eq:hyperg_coarse} together connect the clusters that have connected nodes and raise such connection to hyperedges if a group of nodes are mutually connected. The function $\mathtt{FindCliques}$ is implemented using the clique-finding routine~\cite{cliquealgo} in the Python NetworkX library.

\subsection{Hypergraph Cross Convolution (HGXConv)}\label{subsec:HGXConv}

In addition to the pooling layer, we propose using a hypergraph cross-convolution (HGXConv) operation to encode higher-order interactions among nodes by cross-node products~\cite{wang2024t}. The HGXConv is designed under the neural message passing paradigm~\cite{gilmer2017neural}, aggregating neighboring messages to individual nodes. Consequently, we switch to the node-wise and hyperedge-wise notation, in which $\mathbf{f}_v \in \mathbb{R}^h$ denotes the embedding vector of the node $v \in \mathcal{V}$ and $\mathbf{f}_e \in \mathbb{R}^h$ represents the hyperedge embedding vector for any hyperedge $e\in \mathcal{E}$. 

In the $l$-th layer ($l=0, 1, ..., L-1$), let $(\mathcal{G}^{(l)}, \mathbf{F}_{\mathcal{V}}^{(l)})$ be the hypergraph tuple. Suppose that the hypergraph topology $\mathcal{G}^{(l)}$ consists of a node set $\mathcal{V}^{(l)}$ and a hyperedge set $ \mathcal{E}^{(l)}$. Denote $V^{(l)}(e) = \{v \in \mathcal{V}^{(l)} | v \in e \}$ as the set of vertices that are contained in hyperedge $e$, and $E^{(l)}(v) = \{e\in \mathcal{E}^{(l)} | e \ni v\} $ as the hyperedge set that contains node $v$, one layer of the HGXConv is composed of three modules:
\begin{equation}
    \begin{cases}
        \mathtt{Node2Hyperedge}&: \mathbf{f}_e^{(l+1)} = f_{\mathcal{V}\to \mathcal{E}} (\mathbf{f}_v^{(l)}, V^{(l)}(e));\\ 
         \mathtt{Hyperedge2Node}&: \mathbf{f}_{\mathcal{N}(v)}^{(l+1)} = f_{\mathcal{E}\to \mathcal{V}} (\mathbf{f}_e^{(l+1)}, E^{(l)}(v));\\
        \qquad \mathtt{Fusion}&: \mathbf{f}_v^{(l+1)} = f_{\mathtt{combine}}(\mathbf{f}_v^{(l)},\mathbf{f}_{\mathcal{N}(v)}^{(l+1)}).
    \end{cases}
\end{equation}
The operation $\mathtt{Node2Hyperedge}$ aggregates information from individual nodes to their corresponding hyperedges as shown in Fig.~\ref{fig:overall_architecture} (right), generating an intermediate hyperedge embedding $\mathbf{f}_e$, differentiating hypergraph convolution from regular graph convolution~\cite{hnhn, Allset}. Following this, $\mathtt{Node2Hyperedge}$ processes these embedded hyperedge representations back to the nodes that are connected to each hyperedge. The resulting node embedding $\mathbf{f}_{\mathcal{N}(v)}^{(l+1)} \in \mathbb{R}^{h}$ is referred to the neighborhood embedding for node $v$, because this embedding captures the summarized local connectivity patterns around each node. Considering that neighboring nodes do not necessarily behave similarly to central nodes, it is advantageous to incorporate the original node features again. To address this, we introduce a fusion module that combines the original node features with the embedded neighborhood features from message passing. 

We describe the three modules of HGXConv using layer-agnostic notation in the following subsections for clarity. However, it is important to note that both the input hypergraph tuple data and the resulting embeddings differ across layers.

\subsubsection{Node2Hyperedge}
\par
The goal of $\mathtt{Node2Hyperedge}$ is to obtain hyperedge embeddings that capture the mixed (higher-order) effect among the nodes incident to a hyperedge. Inspired by the tensor hypergraph message passing (T-HGMP)~\cite{wang2024t}, we define the $\mathtt{Node2Hyperedge}$ mapping via element-wise multiplicative mixing. 
Specifically, consider a hyperedge $e$ with incident node set $V(e)=\{v_1,\ldots,v_k\}$, where $k=|V(e)|$ denotes the hyperedge degree. Let $\mathbf{f}_{v_i}\in\mathbb{R}^{h}$ be the feature vector of node $v_i$. {We first compute a transformed node representation
\begin{equation}\label{eq:n2e_node_transform}
\mathbf{a}_{v_i} = \sigma_1\!\left(\mathbf{W}_v \mathbf{f}_{v_i} + \mathbf{b}_v\right)\in\mathbb{R}^{h'},
\end{equation}
where $\mathbf{W}_v\in\mathbb{R}^{h'\times h}$ and $\mathbf{b}_v\in\mathbb{R}^{h'}$ are learnable parameters shared across nodes. We then define the node-to-hyperedge aggregation $f_{\mathcal{V}\to \mathcal{E}}:\mathbb{R}^{k\times h}\to\mathbb{R}^{h'}$ as the element-wise geometric mean:
\begin{equation}\label{eq:cross_node}
\mathbf{f}_e
=
\left(
\mathbf{a}_{v_1}\odot \cdots \odot \mathbf{a}_{v_k}
\right)^{\odot \frac{1}{k}},
\end{equation}
where $\odot$ denotes element-wise product and $(\cdot)^{\odot \frac{1}{k}}$ denotes the element-wise power. Eq.~\eqref{eq:cross_node} is permutation invariant with respect to the node order in $V(e)$, and can be viewed as a degree-normalized $k$-way multiplicative feature mixing operator aiming to capture higher-order node interactions. For implementation, we use $\sigma_1(\cdot)=\mathrm{softplus}(\cdot)=\log(1+e^{(\cdot)})$ to ensure positivity. To improve numerical stability, we compute Eq.~\eqref{eq:cross_node} in the log-domain:
\begin{equation}\label{eq:cross_node_log}
\mathbf{f}_e
=
\exp\!\left(
\frac{1}{k}\sum_{i=1}^{k}\log\!\left(\mathbf{a}_{v_i}\right)
\right),
\end{equation}
where $\log(\cdot)$ and $\exp(\cdot)$ are applied element-wise.

\noindent {\bf Remark.} Eq.~\eqref{eq:cross_node} can be interpreted as inducing rich cross-node interactions. 
To build intuition, consider a scalar feature per node and ignore nonlinearities. Then the aggregation reduces to
$(w f_{v_1} + b)(w f_{v_2} + b)\cdots (w f_{v_k} + b)$, whose expansion contains not only the pure $k$-way product
$w^k \prod_{i=1}^k f_{v_i}$ but also many lower-order cross terms due to the bias term. 
This illustrates that applying an affine transform before multiplicative mixing can capture interaction patterns beyond a single fixed order (up to order $k$), providing a more expressive alternative to purely additive aggregation.}

\subsubsection{Hyperedge2Node} To aggregate hyperedge embeddings back to nodes, we simply define $f_{\mathcal{E}\to \mathcal{V}}$ as a multilayer-perceptron (MLP) followed by summation:
\begin{equation}\label{eq:edge_to_node}
    \mathbf{f}_{\mathcal{N}(v)} = \sum_{e\in E(v)} \sigma_3 (\mathbf{W}_e \mathbf{f}_e + \mathbf{b}_e),
\end{equation}
where $\mathbf{W}_e\in \mathbb{R}^{h'\times h}$ and $\mathbf{b}_e\in \mathbb{R}^{h'}$ are trainable parameters in the neural network and ReLU is used for $\sigma_3$ in practice. 

\subsubsection{Fusion} Inspired by GraphSAGE~\cite{GraphSage}, we propose to combine self node feature and neighborhood embedding and define $f_{\mathtt{combine}}$ as: 
\begin{equation}\label{eq:fusion}
    \mathbf{f}'_v = \sigma_4(\mathbf{W}_{\mathtt{fuse}}[\mathbf{f}_v, \mathbf{f}_{\mathcal{N}(v)}]  + \mathbf{b}_{\mathtt{fuse}}).
\end{equation}
The concatenated feature $[\mathbf{f}_v, \mathbf{f}_{\mathcal{N}(v)}]\in \mathbb{R}^{2h}$ is first fed into an MLP with learnable weight matrix $\mathbf{W}_{\mathtt{fuse}} \in \mathbb{R}^{h\times 2h}$ and bias matrix $\mathbf{b}\in \mathbb{R}^{h}$, and then passed in the ReLU activation function.  This concatenation allows the MLP to learn the importance of weights assigned to the self-node feature and the neighborhood feature.

\subsection{Hypergraph U-Net Architecture}
The hypergraph u-net (HGUN) architecture is illustrated in Fig.~\ref{fig:HGUN}. It consists of an encoder and a decoder to coarsen and expand hypergraphs along with node features. The encoder contains several blocks, each of which is constructed by an HGXConv layer and a PHPool layer. HGXConv aggregates information from the local neighborhood to hyperedges and back to each node, and PHPool is responsible for reducing the size of hypergraphs to consider clustering characteristics according to Eq.~\eqref{eq:PHPool}. Note that the cluster assignment generation is determined in a pre-processing step and injected into HGUN's for coarsening as illustrated in Fig.~\ref{fig:overall_architecture} (left). This design deviates from end-to-end training, but the benefit of doing so is to construct modularity-guided pooling operations across layers (i.e., the sum of hypergraph modularity is maximized along a globally constructed dendrogram cut), while sequential processing only solves the problem for layer-wise local optimization. In subsection~\ref{subsec: depth_sensitivi}, we will show that this pooling design is robust to the number of layers.

The proposed HGUN can be used in encoder-only or encoder-decoder formats as shown in Fig.~\ref{fig:two_variation}. Using only the encoder, we can obtain a hypergraph embedding and perform hypergraph classification tasks (subsection~\ref{subsec:hypergraph_classification}). On the other hand, attaching a decoder to the condensed embedding can recover the original hypergraph structure and learn to obtain new node features according to the task. Particularly, the HGUN decoder is mirrored to the encoder, except we conduct PH-UnPool by reversing the process in Eq.~\eqref{eq:PHPool} as follows:
\begin{subequations}\label{eq:PH-UnPool}
    \begin{align}
    &\mathbf{F}_{\mathcal{V}_{dec}}^{(l-1)} = \mathbf{S}^{(l)} \mathbf{F}_{\mathcal{V}_{dec}}^{(l)} \in \mathbb{R}^{N_{l-1}\times h}, N_{l-1} > N_{l};\\
    &\mathbf{A}_{dec}^{(l-1)} = \mathbf{S}^{(l)} \mathbf{A}_{dec}^{(l)}   \mathbf{S}^{(l)^T} \in \mathbb{R}^{N_{l-1} \times N_{l-1}};\\
    &\mathcal{G}_{dec}^{(l-1)} = \mathtt{FindClique}(\mathbf{A}_{dec}^{(l-1)} ),
\end{align}
\end{subequations}
for $l = L,..., 1$. The starting point of the HGUN decoder is the condensed hypergraph structure and node features obtained from encoder, and the eventual output of HGUN decoder is $(\mathcal{G}, \mathbf{Y}) = (\mathcal{G}_{dec}^{(0)}, \mathbf{F}_{\mathcal{V}_{dec}}^{(0)})$. This encoder-decoder architecture thus can be used in node signal reconstruction (subsection~\ref{subsec:synthetic}) and classification tasks (subsection~\ref{subsec:anomaly}).

\begin{figure}[t!]
  \centering
  \includegraphics[width=\linewidth]{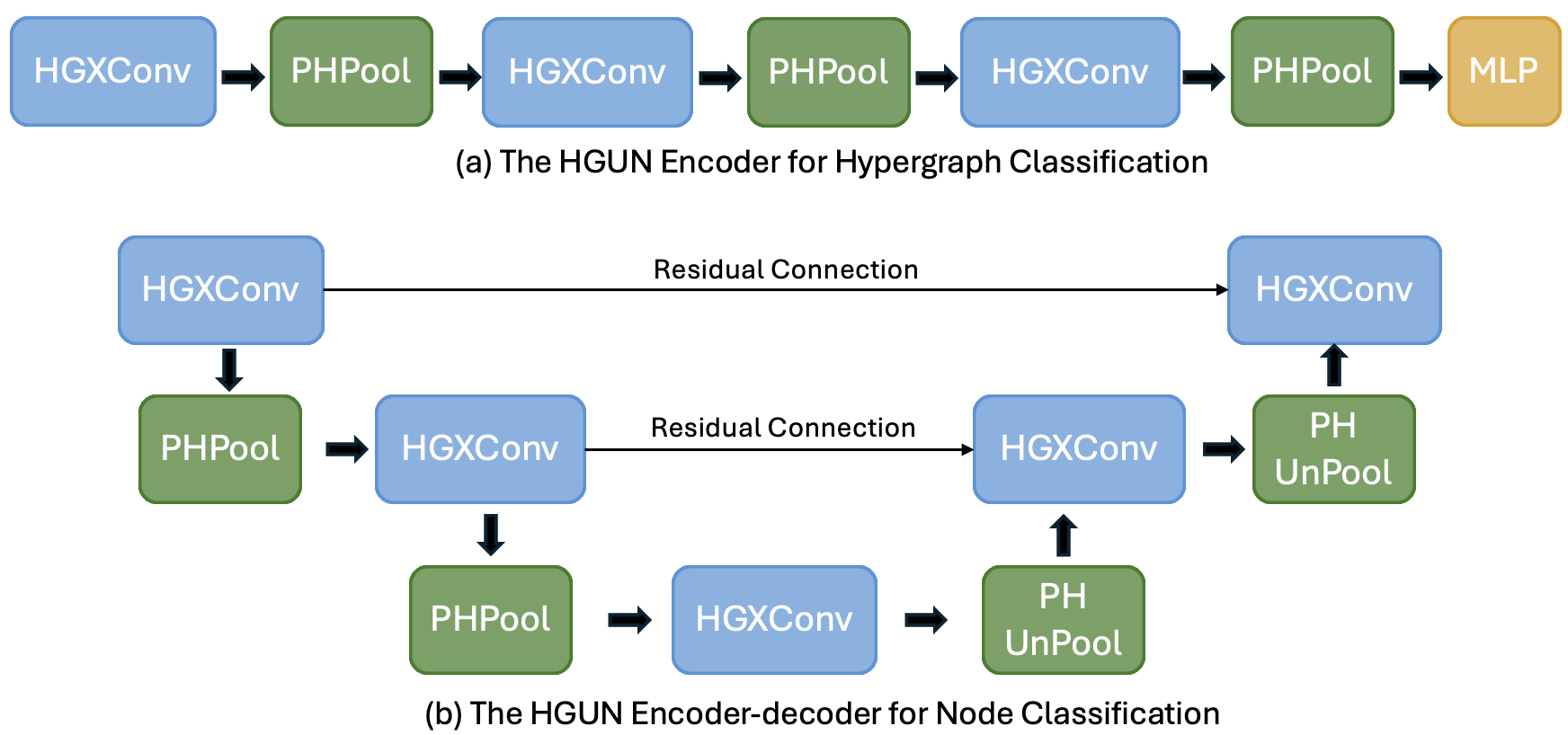}
  \vspace{-0.7cm}
  \caption{Two variations of HGUN: (a) Encoder only for hypergraph-level tasks; (b) Encoder-decoder for node-level tasks.}
  \label{fig:two_variation}
\end{figure}


\section{Theoretical Analysis}
The proposed hypergraph u-net includes two major components: the HGXConv layer and the PHPool layer; we theoretically analyze the computational complexity and show the desired properties of these two components.
\subsection{Computational Complexity Analysis}


\begin{proposition}
The time complexity of building $L$-layer PHPool operators is $\mathcal{O}(N^2 + T(K + N + \delta_v^2 C) + L N \log N)$, where $N$ and $K$ are the total numbers of nodes and hyperedges, $T$ is the number of distinct thresholds in a dendrogram. $\delta_v$ is the maximum node degree, and $C$ is the average number of clusters in the dendrogram, with $C \ll N$. 
\end{proposition}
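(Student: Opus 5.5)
The plan is to split the construction of the $L$ PHPool operators into the same stages as the pipeline of subsection~\ref{subsec:clustering} and Algorithm~\ref{algo: flat}, bound each stage, and add the bounds. There are four stages: (i) building the dendrogram; (ii) Step~1 of Algorithm~\ref{algo: flat}, which evaluates the modularity at each of the $T$ thresholds; (iii) Step~2, the greedy threshold selection; and (iv) turning the selected length-$N$ label vectors into the consecutive parent--child mappings $\mathbf{S}^{(l)}$. The claimed bound is then the sum of the stage costs.

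\textbf{Stage (i).} I would cite the modern agglomerative routines of~\cite{mullner2011modern, bar2001fast}, which cost $\mathcal{O}(N^2)$ once the pairwise distance matrix is available. The delicate point is the Dijkstra preprocessing. All-pairs shortest paths on the (unweighted) clique expansion can be done by BFS from every node, and a naive bound on this is larger than $N^2$. I would handle it in one of two ways. The first is to adopt the convention, implicit in the statement, that the distance matrix is an input of size $N^2$ read once by the linkage routine. The second is to assume sparse hypergraphs, with bounded degree and hyperedge size, so that each BFS is $\mathcal{O}(N)$. I expect this to be the main obstacle: making the $\mathcal{O}(N^2)$ term honest requires stating such an assumption explicitly.

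\textbf{Stage (ii).} This dominates the middle term. For a fixed threshold $t$, I would bound each piece of the work separately:
\begin{itemize}
\item cutting the dendrogram to get $c(t)$ costs $\mathcal{O}(N)$;
\item the null-model part of Eq.~\eqref{eq:mod} equals $\frac{1}{d_{\mathcal V}^2}\sum_{c}\bigl(\sum_{u\in c} d(u)\bigr)^2$, so it needs only the per-cluster degree sums, which take $\mathcal{O}(N)$ to accumulate and $\mathcal{O}(C)$ to combine;
\item the $\mathbf{A}^{hyp}$ part counts, for each hyperedge $e$, the pairs inside $e$ that land in the same cluster, weighted by $1/(d(e)-1)$.
\end{itemize}
For the last piece, I would compute it hyperedge by hyperedge: a single pass over the hyperedges, histogramming the cluster labels of each hyperedge's members, gives the $K$ term. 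I would then charge the remaining cluster-pair bookkeeping as follows. Each node touches at most $\delta_v$ hyperedges, so the pair interactions routed through a node number at most $\delta_v^2$. Aggregating these into at most $C$ cluster buckets gives a per-threshold cost of $\mathcal{O}(K+N+\delta_v^2 C)$. This step needs care in choosing the accounting so that the three terms come out exactly as stated; I would try the accumulation $\sum_c \mathbf{1}_c^T\mathbf{A}^{hyp}\mathbf{1}_c$ evaluated sparsely. Multiplying by $T$ thresholds gives $\mathcal{O}(T(K+N+\delta_v^2C))$.

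\textbf{Stages (iii) and (iv).} Step~2 performs at most $L$ argmax scans over at most $T$ entries, costing $\mathcal{O}(LT)$, which is absorbed into the middle term. Building each mapping $\mathbf{S}^{(l)}$ means matching the child labels against the parent labels for all $N$ nodes. I would do this by sorting the nodes by their (parent, child) label pairs and checking uniformity within each group, which costs $\mathcal{O}(N\log N)$ per layer and $\mathcal{O}(LN\log N)$ over the $L$ layers. Summing the four stages gives the stated complexity.
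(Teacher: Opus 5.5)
Your proposal follows the paper's proof, which uses the same decomposition: an $\mathcal{O}(N^2)$ dendrogram via the nearest-neighbor-chain algorithm, $T$ modularity evaluations at $\mathcal{O}(K+N+\delta_v^2 C)$ each (a cost the paper asserts without derivation), and $\mathcal{O}(N\log N)$ per layer for the parent--child mappings; your treatment of the shortest-path preprocessing and the $\mathcal{O}(LT)$ greedy scan is more careful than the paper, which folds the former into $\mathcal{O}(N^2)$ without justification and omits the latter. One correction: your per-hyperedge label histogram already yields the $\mathbf{A}^{hyp}$ term in $\mathcal{O}(\sum_e d(e))$ time, which is $\mathcal{O}(K)$ under your bounded-size assumption, so the extra $\delta_v^2$-per-node charging is unnecessary, and as worded it would total $N\delta_v^2$ rather than $C\delta_v^2$.
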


\begin{proof}
        The time complexity of constructing $L$ PHPool operators can be broken into three steps:
    \begin{enumerate}
        \item The construction of the hierarchical clustering dendrogram takes $O(N^2)$ using the nearest-neighbor chain algorithm~\cite{bar2001fast}.
        \item To select the optimal thresholds from the dendrogram, we need to examine $T$ thresholds, and computing the modularity at a particular threshold takes $\mathcal{O}(K + N + \delta_v^2 C)$, where $\delta_v$ denotes the maximum degree of nodes and $C$ is the average number of clusters. Overall, the modularity-based selection takes $\mathcal{O}(T(K + N + \delta_v^2 C))$ time.
        \item To determine consecutive layer-to-layer mappings, for every two connected layers, we examine each cluster by finding common parents of the nodes in two layers, the step of searching for common-clustered parents requires $\mathcal{O}(N \log N)$, so overall $\mathcal{O}(L N \log N)$.
        \end{enumerate}

      Hence, we obtain the total complexity: $\mathcal{O}(N^2 + T(K + N + 
       \delta_v^2 C) + L N \log N)$.
\end{proof}

\noindent {\bf Remark.} The most time-consuming part of constructing $L$ PHPool layers is the construction of the dendrogram that takes $O(N^2)$. The three terms map to the PHPool stages in Section~\ref{subsec:phpool}: $O(N^2)$ for shortest-path distances and agglomerative dendrogram construction; $O(T(K + N + \delta_v^2 C))$ for modularity-guided threshold selection; and $O(LN\log N)$ for the parent--child mapping underlying clique-based hyperedge reconstruction (and its PHUnpool counterpart). This pipeline runs once as preprocessing, so it does not affect per-epoch training speed; the runtime in Table~\ref{table: runtime} is measured \emph{after} PHPool assignments are precomputed. The $O(N^2)$ bottleneck can be further mitigated by approximate hierarchical clustering ($O(N\log N)$)~\cite{moseley2023approximation, charikar2017approximate, manghiuc2021hierarchical} or distributed schemes ($O(|\mathcal{E}|N)$)~\cite{bateni2017affinity}.

\begin{proposition}
    The time complexity of the proposed HGXConv for each layer is $\mathcal{O}(K\delta_e h + Kh^2 + N \delta_v h + N h^2)$, where $N$ and $K$ are the number of nodes and hyperedges, respectively. $\delta_v$ and $\delta_e$ are the maximum node degree and hyperedge degree. $h$ denotes the feature dimension.
\end{proposition}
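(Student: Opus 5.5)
We bound the cost of one HGXConv layer by treating its three modules, $\mathtt{Node2Hyperedge}$, $\mathtt{Hyperedge2Node}$ and $\mathtt{Fusion}$, one at a time and adding the results. Throughout we treat $h'$ as $\mathcal{O}(h)$, which holds since $\mathbf{W}_v$, $\mathbf{W}_e$ and $\mathbf{W}_{\mathtt{fuse}}$ are all $\mathcal{O}(h)\times\mathcal{O}(h)$. Element-wise nonlinearities (softplus, $\log$, $\exp$, ReLU) cost $\mathcal{O}(1)$ per entry.

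\textbf{Node2Hyperedge.} We first apply the node transform in Eq.~\eqref{eq:n2e_node_transform} once per node, not once per incidence. This costs $\mathcal{O}(Nh^2)$ for the matrix-vector products and $\mathcal{O}(Nh)$ for the bias and softplus. The log-domain aggregation in Eq.~\eqref{eq:cross_node_log} is then evaluated for each hyperedge $e$. It sums $|V(e)|\le\delta_e$ vectors of length $h'$, then divides by $k$ and exponentiates. Each hyperedge therefore costs $\mathcal{O}(\delta_e h)$, for a total of $\mathcal{O}(K\delta_e h)$. The point to emphasize is that the $\mathbf{W}_v$ transform must be cached per node. If it were recomputed inside every hyperedge, the cost would instead be $\mathcal{O}(K\delta_e h^2)$ and would not match the claimed bound.

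\textbf{Hyperedge2Node.} In Eq.~\eqref{eq:edge_to_node}, the quantity $\sigma_3(\mathbf{W}_e\mathbf{f}_e+\mathbf{b}_e)$ depends only on $e$. We compute it once per hyperedge, which costs $\mathcal{O}(Kh^2)$. Each node $v$ then sums $|E(v)|\le\delta_v$ cached vectors of length $h$, for a total of $\mathcal{O}(N\delta_v h)$.

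\textbf{Fusion.} Eq.~\eqref{eq:fusion} forms a concatenation of length $2h$ per node and multiplies it by $\mathbf{W}_{\mathtt{fuse}}\in\mathbb{R}^{h\times 2h}$. This costs $\mathcal{O}(Nh^2)$, and the bias and ReLU add $\mathcal{O}(Nh)$.

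Summing the three modules and absorbing the lower-order terms $\mathcal{O}(Nh)$ and $\mathcal{O}(Kh)$ gives $\mathcal{O}(K\delta_e h + Kh^2 + N\delta_v h + Nh^2)$, as claimed. One could alternatively state the incidence sums as $\sum_e|e| = \sum_v d(v) = \mathrm{nnz}(\mathbf{H})$; we bound them by $K\delta_e$ and $N\delta_v$ respectively to match the form of the statement.
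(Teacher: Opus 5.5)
Your proof is correct and follows the same module-by-module accounting as the paper: bound the cost of $\mathtt{Node2Hyperedge}$, $\mathtt{Hyperedge2Node}$ and $\mathtt{Fusion}$ separately, then sum. The only difference is bookkeeping. The paper's proof charges $Kh^2$ to node-to-hyperedge and $Nh^2$ to hyperedge-to-node, whereas you charge $Nh^2$ to the per-node $\mathbf{W}_v$ transform of Eq.~\eqref{eq:n2e_node_transform} and $Kh^2$ to the per-hyperedge $\mathbf{W}_e$ map of Eq.~\eqref{eq:edge_to_node}, which matches the equations as written more faithfully. Your caching remark is what keeps the aggregation term at $K\delta_e h$ rather than $K\delta_e h^2$, and the totals coincide.
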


\begin{proof}
     The first two terms in the expression correspond to the complexity of node-to-hyperedge propagation, where $K\delta_e h$ is the complexity for aggregating node features, $Kh^2$ is complexity for transforming node features using MLP. Similarly, the last two terms represent the complexity for hyperedge-to-node propagation. In addition, the fusion step takes $\mathcal{O}(2Nh^2)$, so the overall time complexity for each HGXConv layer is $\mathcal{O}(K\delta_e h + Kh^2 + N \delta_v h + N h^2)$.
\end{proof}

\subsection{Permutation Equivariance}
In hypergraph representation learning, an important property is to ensure permutation equivariant or invariant. We show that PHPool is permutation equivariant. Permutation equivariant means that if the nodes of the graph are permuted, the corresponding output of the model should reflect that permutation in a consistent way.

\begin{definition}{\bf \textit{Permutation Equivariance for Hypergraphs}.} 
Given a permutation function $\pi: \mathcal{V} \to \mathcal{V}$ such that, for any node $v_i, i \in \{1, 2, ..., N\}$, there exists exactly one index $j$ such that $\pi(j)=i$. Let $\mathcal{G}$ be any  hypergraph with node feature matrix $\mathbf{F}$, a function $f: (\mathcal{G}, \mathbf{F}) \to y$ is permutation equivariant \textit{iff} $f(\pi(\mathcal{G}), \pi(\mathbf{F}))=\pi(f(\mathcal{G}, \mathbf{F})) $. 
\end{definition}
\begin{proposition}
The HGUN is permutation equivariant. 
\end{proposition}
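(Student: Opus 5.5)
We argue compositionally. HGUN is a composition of three kinds of blocks: HGXConv layers, PHPool/PHUnpool layers (including the precomputed cluster assignments), and skip connections. Represent a node permutation $\pi$ by a permutation matrix $\mathbf{P}$ acting on the $N$ nodes. Then $\pi(\mathbf{H})=\mathbf{P}\mathbf{H}$ (hyperedge labels are unchanged or permuted arbitrarily) and $\pi(\mathbf{F}_{\mathcal V})=\mathbf{P}\mathbf{F}_{\mathcal V}$. Since a composition of equivariant maps is equivariant, it suffices to check each block separately. For pooled layers we use the right notion of equivariance: the permutation induced on the clusters, $\mathbf{P}^{(l)}$, may be any relabelling of the coarse nodes, so each block need only be equivariant up to such a relabelling that is consistent across the layers.

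\textbf{HGXConv.} In Node2Hyperedge, Eq.~\eqref{eq:n2e_node_transform} is a shared node-wise map, and Eq.~\eqref{eq:cross_node} is an element-wise product over the set $V(e)$. Permuting the nodes only relabels the members of $V(e)$, and the product and geometric mean are symmetric, so each $\mathbf{f}_e$ is unchanged. Eq.~\eqref{eq:edge_to_node} is a sum over the set $E(v)$, and $E(\pi(v))$ in the permuted hypergraph equals $E(v)$ in the original, so $\mathbf{f}_{\mathcal N(\pi(v))}$ is the old $\mathbf{f}_{\mathcal N(v)}$. The fusion step in Eq.~\eqref{eq:fusion} is applied row-wise with shared weights. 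Together these give $\mathrm{HGXConv}(\mathbf{P}\mathbf{H},\mathbf{P}\mathbf{F})=\mathbf{P}\,\mathrm{HGXConv}(\mathbf{H},\mathbf{F})$.

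\textbf{Cluster assignments.} This is the main obstacle. We must show that the preprocessing $\mathcal G\mapsto\mathcal S$ commutes with $\pi$, i.e.\ $\mathbf{S}^{(l)}(\pi\mathcal G)=\mathbf{P}^{(l)}\mathbf{S}^{(l)}(\mathcal G)\mathbf{P}^{(l+1)T}$ for suitable cluster relabellings $\mathbf{P}^{(l)}$, with $\mathbf{P}^{(0)}=\mathbf{P}$. The argument proceeds in four steps.
\begin{enumerate}
\item The shortest-path matrix transforms as $\mathbf{D}\mapsto\mathbf{P}\mathbf{D}\mathbf{P}^T$.
\item Agglomerative linkage depends only on the pairwise distances, so the dendrogram is the same tree with relabelled leaves, and the set of merge heights $T$ is unchanged.
\item Cutting at a threshold $t$ therefore yields the same partition up to $\pi$. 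The modularity in Eq.~\eqref{eq:mod} is a sum over pairs $(u,v)$ of quantities that depend only on the partition and on $\mathbf{A}^{hyp},\mathbf{P}^{hyp}$, both of which transform by conjugation, so $Q$ is invariant. Hence Algorithm~\ref{algo: flat} selects identical threshold indices.
\item The parent–child tuples relate the same sets of nodes, which defines the $\mathbf{P}^{(l)}$.
\end{enumerate}
The delicate point is tie-breaking. Equal distances during merging, and ties in the $\argmax$, could be resolved by node index. We will handle this by assuming deterministic, label-independent tie handling: merge all pairs at equal height simultaneously, and note that the $\argmax$ runs over threshold indices rather than node indices. Alternatively, the statement can be restricted to hypergraphs whose distance ties are broken in a permutation-consistent way.

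\textbf{Coarsening and unpooling.} Substituting into Eq.~\eqref{eq:PHPool} gives
$(\mathbf{P}^{(l)}\mathbf{S}\mathbf{P}^{(l+1)T})^T\mathbf{P}^{(l)}\mathbf{F}=\mathbf{P}^{(l+1)}\mathbf{S}^T\mathbf{F}$.
Likewise $\mathbf{A}^{(l+1)}\mapsto\mathbf{P}^{(l+1)}\mathbf{A}^{(l+1)}\mathbf{P}^{(l+1)T}$. The maximal cliques of a relabelled graph are the relabelled cliques, so $\mathtt{FindCliques}$ is equivariant. The PHUnpool map in Eq.~\eqref{eq:PH-UnPool} is handled the same way, with $\mathbf{S}$ in place of $\mathbf{S}^T$, returning the permutation $\mathbf{P}^{(l-1)}$. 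Each skip connection adds or concatenates encoder and decoder features at the same layer, and both are permuted by the same $\mathbf{P}^{(l)}$, so they remain aligned. Chaining these blocks, the decoder output satisfies $\mathbf{Y}(\pi\mathcal G,\pi\mathbf{F})=\mathbf{P}\mathbf{Y}$. The encoder-only supernode output is invariant, since $N_L=1$ forces $\mathbf{P}^{(L)}=1$.
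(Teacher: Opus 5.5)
Your proof is correct and follows the same block-by-block route as the paper's. The shared steps are: the dendrogram, and hence $\mathbf{S}$, permutes consistently; the coarsening step is equivariant; the two set-aggregations in HGXConv are invariant; and the fusion step is row-wise equivariant. You additionally write out the matrix identities, the invariance of the modularity and therefore of the threshold selection, and the unpooling and skip-connection steps that the paper leaves implicit.

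Your tie-breaking caveat is a genuine refinement rather than a gap. The paper's assertion that the dendrogram ``will permute consistently'' tacitly assumes label-independent handling of equal merge distances. That is not automatic here, because shortest-path distances are integer-valued and ties are common.
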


\begin{proof}
     For PHPool layers, the cluster assignment matrix $\mathbf{S}$ generation is permutation equivariant because the hierarchical dendrogram will permute consistently if node order changes. Furthermore, from Eq.~\eqref{eq:PHPool}, we can see that hypergraph permutation does not alter the pooled hypergraph features; it only changes the order in which the features are computed. Regarding HGXConv layers, the two-stage propagations $f_{\mathcal{V}\to \mathcal{E}}$ in Eq.~\eqref{eq:cross_node} and $f_{\mathcal{E}\to \mathcal{V}}$ in Eq.~\eqref{eq:edge_to_node} are permutation invariant because they are functions operated on an incident node sets and hyperedge sets. Since the fusion step in Eq.~\eqref{eq:fusion} requires node features as input, it is permutation equivariant, which makes the overall HGXConv layer permutation equivariant. Overall, the HGUN is permutation equivariant.
\end{proof}

\subsection{Expressive power of hypergraph pooling operators}~\label{subsec:expressive}
\par
The expressive power of pooling operations is defined as the capability of retaining the same expressive power as the message-passing layers before it~\cite{bianchi2024expressive}. We show the PHPool layer satisfies three sufficient (not necessary) conditions for examining the expressive power of hypergraph pooling operators. The theorem is as follows

\begin{theorem}\label{theorem:pool_express}
    Let $ \mathcal{G}_1 =(\mathcal{V}_1, \mathcal{E}_1) $ with $|\mathcal{V}_1| = N_1$ and $ \mathcal{G}_2 =(\mathcal{V}_2,\mathcal{E}_2) $ with $|\mathcal{V}_2| = N_2$ be two nonisomorphic hypergraphs. Suppose after applying a block of several message-passing layers, the two hypergraphs are $\mathcal{G}_{1}^{mp}$ and $\mathcal{G}_{2}^{mp}$ with node features $\mathbf{F}_{1}^{mp} \in \mathbb{R}^{N_1\times D}$ and $\mathbf{F}_{2}^{mp} \in \mathbb{R}^{N_2\times D}$. For a given pooling operator $\mathtt{POOL}$, let $\mathcal{G}_1^{pool} = \mathtt{POOL}(\mathcal{G}_1^{mp})$ and $\mathcal{G}_2^{pool} = \mathtt{POOL}(\mathcal{G}_2^{mp})$ have the same number of nodes $|\mathcal{V}_1^{pool}| = |\mathcal{V}_2^{pool}| = N$. The proposed HGXConv layer and the PHPool layer satisfy the following three conditions:

    \begin{enumerate}
        \item $ \sum_i^{N_1} \mathbf{f}_i^{mp} \neq \sum_j^{N_2} \mathbf{f}_j^{mp}$
        \item The cluster assignment matrix $\mathbf{S}$ by $\mathtt{POOL}$ satisfies $\sum_{j=1}^{N} s_{ij} = \lambda$, and $\lambda>0,  \forall i$. 
        \item The coarsened node feature $\mathbf{F}^{pool}$ is a cluster-wise aggregation of the unpooled node feature $\mathbf{F}^{mp}$,
    \end{enumerate}
 \hspace{.15in} such that the pooling operator $\mathtt{POOL}$ will produce different node features $\mathbf{F}_1^{pool}\in \mathbb{R}^{N\times D}$ and $\mathbf{F}_2^{pool}\in \mathbb{R}^{N\times D}$ for any row permutations.
\end{theorem}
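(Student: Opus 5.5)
The plan follows the template of Bianchi and Lachi~\cite{bianchi2024expressive} for graph pooling and adapts it to the hypergraph setting. The argument has two parts. First, I check that the three conditions hold for HGXConv followed by PHPool. Second, I show that the conditions force $\mathbf{F}_1^{pool}$ and $\mathbf{F}_2^{pool}$ to differ up to any row permutation.

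\textbf{Verifying the conditions.} Condition~1 is read as an assumption about the message-passing block. It says the HGXConv stack is injective enough to separate $\mathcal{G}_1$ and $\mathcal{G}_2$ at the multiset level. I would justify it by noting that the sum in Eq.~\eqref{eq:edge_to_node}, the concatenation-plus-MLP fusion in Eq.~\eqref{eq:fusion}, and the log-domain mean in Eq.~\eqref{eq:cross_node_log} are all injective multiset functions for suitable weights. This is the standard GIN-type argument applied to the two-stage propagation, so HGXConv is as discriminative as hypergraph WL on the two inputs.

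Condition~2 is structural. PHPool cuts a dendrogram, so each node lies in exactly one cluster at every resolution. Every row of $\mathbf{S}^{(l)}$ therefore has exactly one nonzero entry, which gives $\sum_j s_{ij}=1=\lambda$. The same holds for the composed maps between consecutive layers, since a product of hard assignment matrices is again a hard assignment matrix.

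Condition~3 is immediate from Eq.~\eqref{eq:node_coarse}, since $\mathbf{F}^{pool}=\mathbf{S}^T\mathbf{F}^{mp}$ sums the features of the members of each cluster.

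\textbf{Distinguishing the pooled features.} Suppose for contradiction that $\mathbf{F}_1^{pool}=\mathbf{P}\mathbf{F}_2^{pool}$ for some permutation matrix $\mathbf{P}$. Multiplying both sides on the left by $\mathbf{1}_N^T$ and using $\mathbf{1}^T\mathbf{P}=\mathbf{1}^T$ gives equal column sums of the pooled features. By Condition~3,
\[
\mathbf{1}_N^T\mathbf{F}_k^{pool}=\mathbf{1}_N^T\mathbf{S}_k^T\mathbf{F}_k^{mp}=(\mathbf{S}_k\mathbf{1}_N)^T\mathbf{F}_k^{mp}.
\]
By Condition~2, $\mathbf{S}_k\mathbf{1}_N=\lambda\mathbf{1}_{N_k}$, so this equals $\lambda\sum_i\mathbf{f}_i^{mp}$. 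Since $\lambda>0$, equality of the pooled sums forces $\sum_i^{N_1}\mathbf{f}_i^{mp}=\sum_j^{N_2}\mathbf{f}_j^{mp}$, which contradicts Condition~1. Hence no row permutation identifies the two pooled feature matrices.

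\textbf{Main obstacle.} The difficulty is not the algebra but justifying Condition~1 for HGXConv. The geometric-mean aggregation is injective on multisets only with positive features (which softplus guarantees) and with a suitable choice of weights. I would state Condition~1 as a hypothesis on the message-passing block rather than prove full injectivity. A secondary point is that $\lambda$ must be the same for both hypergraphs. Hard assignments from PHPool give $\lambda=1$ in both cases, so this is satisfied.
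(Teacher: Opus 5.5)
The core of your proof is correct, and it does more than the paper's proof. The paper only checks the three conditions for HGXConv and PHPool. Your checks of Conditions~2 and~3 match the paper's: hard assignments give $\sum_j s_{ij}=1$, and $\mathbf{F}^{pool}=\mathbf{S}^T\mathbf{F}^{mp}$ is a cluster-wise sum. The paper then leaves the implication ``Conditions 1--3 $\Rightarrow$ pooled features differ under every row permutation'' to the cited result of Bianchi and Lachi.

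Your column-sum argument supplies that step explicitly. It uses $\mathbf{1}^T\mathbf{P}=\mathbf{1}^T$ and $\mathbf{1}_N^T\mathbf{S}_k^T\mathbf{F}_k^{mp}=\lambda\,\mathbf{1}_{N_k}^T\mathbf{F}_k^{mp}$ with $\lambda>0$. You also note that $\lambda$ must be one common constant for both hypergraphs (here $\lambda=1$), which the paper leaves implicit.

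What you should fix is your justification of Condition~1. The log-domain geometric mean in Eq.~\eqref{eq:cross_node_log} is a mean-type aggregator. For every choice of $\mathbf{W}_v,\mathbf{b}_v$, it identifies incident-feature multisets with the same empirical distribution. For example, a hyperedge whose two nodes carry features $x,y$ and a hyperedge whose four nodes carry $x,x,y,y$ both receive $(\mathbf{a}_x\odot\mathbf{a}_y)^{\odot 1/2}$. So this aggregator is not an injective multiset function for any weights, and positivity from softplus does not change that. The GIN-type argument therefore breaks at the node-to-hyperedge stage. The paper itself states that $f_{\mathcal{V}\to\mathcal{E}}$ is not injective and only argues that the fusion step mitigates this in practice.

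Moreover, even a block exactly as powerful as hypergraph WL could not guarantee Condition~1 for every nonisomorphic pair. WL-indistinguishable nonisomorphic hypergraphs exist, e.g.\ two disjoint triangles versus a $6$-cycle viewed as $2$-uniform hypergraphs. Any WL-bounded block gives such a pair the same multiset of node features, hence equal sums.

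So drop the claim that HGXConv is as discriminative as hypergraph WL, and keep Condition~1 purely as a hypothesis on the given pair, as your closing remark already suggests. With that change your proof is sound and consistent with the paper's own hedged treatment of Condition~1.
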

Condition 1 is a constraint on the message-passing layers before a pooling layer; it requires that the message-passing layers have discriminative power to obtain different sums of node embeddings for two non-isomorphic hypergraphs. Condition 2 requires all nodes in the original hypergraph must contribute to the construction of new supernodes, and the contributions for all nodes are on the same scale. Condition 3 further requires the supernode features to be a cluster-wise aggregation (i.e., a linear combination determined by the cluster assignment matrix) of original node features.

\begin{proof}
    For Condition 1, the hypergraph isomorphism computation~\cite{hyperg_isomorphism} shows that the key requirement for message passing to be expressive is to ensure message passing is injective.  The node-to-hyperedge aggregation function $f_{\mathcal{V}\to \mathcal{E}}$, which computes the element-wise product of the node features, is unfortunately not injective. The hyperedge-to-node summation function $f_{\mathcal{E}\to \mathcal{V}}$, on the other hand, is shown to be injective. The fusion step in Eq.~\eqref{eq:fusion} preserves the self-feature alongside the neighborhood embedding, which mitigates the non-injectivity of the multiplicative node-to-hyperedge aggregation in Eq.~\eqref{eq:cross_node} and improves discriminative capacity in practice.

PHPool assigns each node in the original hypergraph to exactly one cluster, satisfying condition 2: $\sum_{j=1}^N s_{ij} = 1, \forall i$. Furthermore, since our feature coarsening is expressed as $\mathbf{F}^{pool} = \mathbf{S}^T\mathbf{F}^{mp}$, $\mathbf{F}^{pool}$ is a cluster-wise aggregation (a sum of features over members of each cluster) of the unpooled node feature $\mathbf{F}^{mp}$, satisfying Condition~3.
\end{proof}

\subsection{Information Loss Comparison Between Sequential and Parallel Pooling}
\label{subsec:mutual_info}

Here we present the proposition that shows parallel pooling has less information loss than sequential pooling under matched first-level coarsening. This result formalizes one mechanism by which parallel pooling can reduce cumulative information loss relative to sequential pooling, but it should not be read as an unconditional guarantee for arbitrary parallel and sequential pooling maps.

{

\begin{proposition}\notag\label{prop:info_loss_compare}
Let $A$ denote the original hypergraph. Assume the first-level pooled representation is matched in the sequential and parallel constructions, i.e., $B_{\mathrm{seq}}=B_{\mathrm{par}}=B$, or more generally that $I(A;B_{\mathrm{par}})=I(A;B_{\mathrm{seq}})$. Consider
(i) \emph{sequential pooling}, which produces $C_{\mathrm{seq}}$ from $B$ so that $A \to B \to C_{\mathrm{seq}}$, and
(ii) \emph{parallel pooling}, which produces $C_{\mathrm{par}}$ directly from $A$ (with internal randomness independent of $B$ conditional on $A$).

Define the information loss in the pooling procedure as $\operatorname{InfoLoss}(A, B, C) : = H(A) - I(A; B, C)$, where $H(A)$ is the entropy of the original hypergraph $A$, $I(A; B, C)$ is the mutual information between $A$ and the two pooled hypergraph $B, C$. Under the matched first-level coarsening assumption above, the information loss in parallel pooling is at most that in sequential pooling, i.e.,  $\operatorname{InfoLoss}(A, B, C_{\mathrm{par}})  \leq \operatorname{InfoLoss}(A, B, C_{\mathrm{seq}})$.

\end{proposition}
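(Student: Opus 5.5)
The plan is to reduce everything to the chain rule for mutual information together with the Markov property of the sequential pipeline. Since $H(A)$ is common to both sides, the claimed inequality
\[
\operatorname{InfoLoss}(A,B,C_{\mathrm{par}})\le \operatorname{InfoLoss}(A,B,C_{\mathrm{seq}})
\]
is equivalent to $I(A;B,C_{\mathrm{par}})\ge I(A;B,C_{\mathrm{seq}})$. So I will compare the two joint mutual informations directly. Throughout, I will treat $A$, $B$, $C_{\mathrm{seq}}$, $C_{\mathrm{par}}$ as random variables on finite alphabets: hypergraphs on a bounded node set, together with their pooled versions. This makes all entropies finite and the standard identities applicable.

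\textbf{Sequential side.} I will expand by the chain rule,
\[
I(A;B,C_{\mathrm{seq}})=I(A;B)+I(A;C_{\mathrm{seq}}\mid B).
\]
By hypothesis (i), $C_{\mathrm{seq}}$ is produced from $B$ alone, possibly with randomness independent of $A$ given $B$. Hence $A\to B\to C_{\mathrm{seq}}$ is a Markov chain, which means $I(A;C_{\mathrm{seq}}\mid B)=0$. Therefore $I(A;B,C_{\mathrm{seq}})=I(A;B)$: the second pooled level adds nothing about $A$ beyond what $B$ already carries. This is the data-processing phenomenon that motivates parallel pooling.

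\textbf{Parallel side.} The same chain rule gives
\[
I(A;B,C_{\mathrm{par}})=I(A;B)+I(A;C_{\mathrm{par}}\mid B)\ge I(A;B),
\]
by nonnegativity of conditional mutual information. Combining the two displays yields the claim when $B_{\mathrm{seq}}=B_{\mathrm{par}}=B$.

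\textbf{Generalized hypothesis.} For the weaker assumption $I(A;B_{\mathrm{par}})=I(A;B_{\mathrm{seq}})$, I will run the two expansions separately with their own first-level variables:
\[
I(A;B_{\mathrm{par}},C_{\mathrm{par}})\ge I(A;B_{\mathrm{par}})=I(A;B_{\mathrm{seq}})=I(A;B_{\mathrm{seq}},C_{\mathrm{seq}}).
\]
I will also remark that the conditional-independence assumption in (ii) is not needed for the inequality itself. Its role is only to interpret the gap $I(A;C_{\mathrm{par}}\mid B)$ as information that $C_{\mathrm{par}}$ extracts from $A$ and that $B$ misses. This gives the condition for strict inequality: it holds exactly when $C_{\mathrm{par}}$ is not conditionally independent of $A$ given $B$.

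\textbf{Main obstacle.} The only delicate step is justifying the Markov property $A\to B\to C_{\mathrm{seq}}$ for the actual sequential pooling map. In practice a sequential pooler may also consult node features that carry information about $A$. I will state explicitly that ``produces $C_{\mathrm{seq}}$ from $B$'' means $C_{\mathrm{seq}}=g(B,U)$ with $U$ independent of $A$ given $B$, and verify $I(A;C_{\mathrm{seq}}\mid B)=0$ from that definition. I will also note, consistent with the caveat before the proposition, that without this modeling assumption the comparison can fail.
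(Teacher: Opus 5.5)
Your proposal is correct and follows essentially the same route as the paper: the chain rule $I(A;B,C)=I(A;B)+I(A;C\mid B)$, the Markov property of $A\to B\to C_{\mathrm{seq}}$ to make the conditional term vanish on the sequential side, and nonnegativity of $I(A;C_{\mathrm{par}}\mid B)$ on the parallel side, with the matched first-level hypothesis equating $I(A;B_{\mathrm{par}})$ and $I(A;B_{\mathrm{seq}})$. Your extra observations go beyond what the paper states and are accurate: the conditional-independence assumption in (ii) is not needed for the inequality, and the inequality is strict exactly when $I(A;C_{\mathrm{par}}\mid B)>0$.
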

}

{%

\begin{proof}
Denote $B_{\mathrm{seq}}=B_{\mathrm{par}}=B$. The pair \((A,B_{\mathrm{seq}},C_{\mathrm{seq}})\) forms a Markov chain
\(A \!\to\! B_{\mathrm{seq}}\!\to\! C_{\mathrm{seq}}\), hence
\(I(A;C_{\mathrm{seq}}\mid B_{\mathrm{seq}})=0\).
Applying the chain rule \(I(A;B,C)=I(A;B)+I(A;C\mid B)\) gives
\begin{equation}\label{eq:seq}
  I_{\mathrm{seq}}
  := I(A;B_{\mathrm{seq}},C_{\mathrm{seq}})
  = I(A;B_{\mathrm{seq}}).
\end{equation}

\noindent For the parallel architecture no conditional independence holds, so
\begin{equation}\label{eq:par}
  I_{\mathrm{par}}
  := I(A;B_{\mathrm{par}},C_{\mathrm{par}})
  = I(A;B_{\mathrm{par}})
    + I\bigl(A;C_{\mathrm{par}}\mid B_{\mathrm{par}}\bigr).
\end{equation}

\noindent Because the first pooling map is determined by the original hypergraph $A$ in both parallel and sequential pooling, we have
\(I(A;B_{\mathrm{par}})=I(A;B_{\mathrm{seq}})\).
Subtracting \eqref{eq:seq} from \eqref{eq:par} yields
\[
  I_{\mathrm{par}} - I_{\mathrm{seq}}
  = I\bigl(A;C_{\mathrm{par}}\mid B_{\mathrm{par}}\bigr) \;\ge\; 0.
\]

\noindent Finally, we obtain $\operatorname{InfoLoss}_{\mathrm{par}}
  = H(A)-I_{\mathrm{par}}
  \le H(A)-I_{\mathrm{seq}}
  = \operatorname{InfoLoss}_{\mathrm{seq}}.$
\end{proof}
}%

\section{Experiments}
In this section, we conduct comprehensive experiments to assess the performance of our proposed model, which includes hypergraph reconstruction, hypergraph classification, and node classification. The experimental results are presented and discussed. 

\subsection{Hypergraph Reconstruction on Synthetic Networks}\label{subsec:synthetic}
Recall that one of the most important goals in node pooling methods is to preserve as much network structure as possible. As a result, before presenting the experiments in real-world datasets, we first design an experiment on synthetic networks to examine the compression-and-recovery capability of HGUN.

{\bf Data generation.} We generate four hypernetworks: ring, grid, pyramid, and community, and let the coordinate of each node be the node feature matrix $\mathbf{F}\in \mathbb{R}^{N\times 2}$. The ring network contains $200$ nodes and $200$ hyperedges, where each hyperedge is constructed by a node and the left and right neighbors, so the network is a 3-uniform hypergraph. The grid network is constructed by $30 \times 30$ nodes. Similar to the ring network, each hyperedge in the grid network is a small square that contains a node and the right, bottom, and bottom-right neighbors. The pyramid network has $210$ nodes and each hyperedge is a 3-point triangle. These three synthetic hypergraphs all have \textit{uniform} structures, in which cluster behavior is rare. So we further generate a community network to simulate more realistic network structures.

{\bf Experiment setup.} 
The learning objective is to minimize the mean squared error (MSE) between the input coordinates and recovered coordinates. To perform a fair comparison, we use identical u-net architectures and hyperparameters. Following prior work~\cite{bianchi2020spectral}, each u-net has one block of HGXConv + PHPool/ PH-UnPool, and an additional convolution layer at the end. The reason for having an additional convolutional layer is to use it as the final prediction projector. We selected two representative baselines: TopKPool~\cite{graphunet} from node dropping and MinCutPool~\cite{bianchi2020spectral} from node clustering. Regarding hyperparameters, we set the pooling ratio of all models to $25\%$, the learning rate to $0.005$, and the hidden dimension to $32$. We train each model for up to $10,000$ epochs and conduct early stopping with patience of $1000$ epochs. For the two graph autoencoders used for TopKPool and MinCutPool, we use GCN~\cite{GCN} as convolution layers. To control factors other than the pooling approach, we use HGNN~\cite{HGNN} as convolution layers for HGUN rather than the proposed HGXConv. This is because HGNN performs a clique expansion to hypergraphs and adapts GCN to the reduced hypergraph, which is computing the same message passing as GCN.

\begin{figure}[ht]
  \centering
  \includegraphics[width=\linewidth]{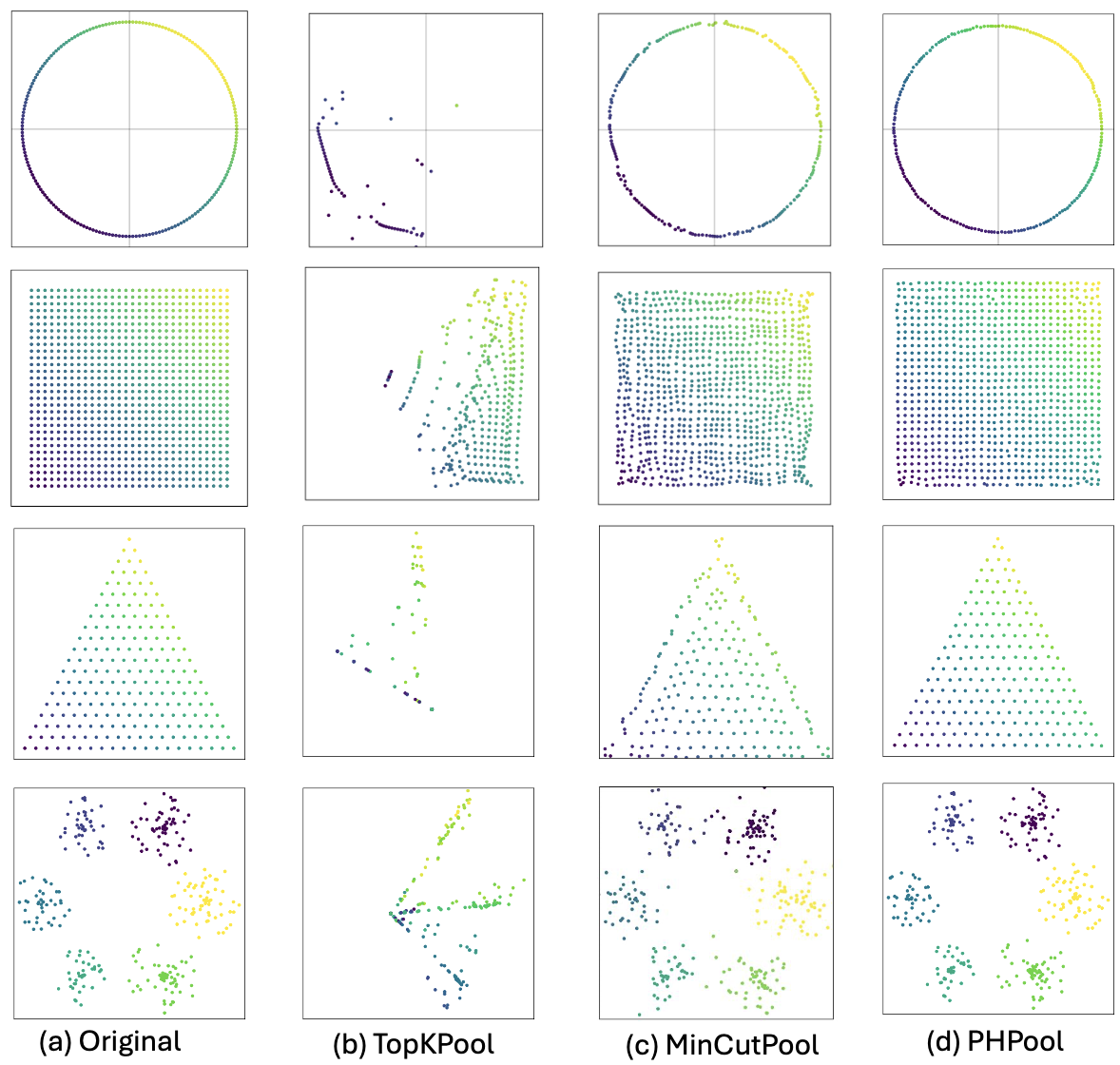}
  \vspace{-0.7cm}
  \caption{Signal reconstructions on ring, grid, pyramid, and community networks with different pooling methods.}
  \label{fig:sig_recon}
\end{figure}

{\bf Reconstruction results and discussion} Fig.~\ref{fig:sig_recon} presents the original network signals $\mathbf{F}$ and the reconstructed signals $\mathbf{F}^{rec}$ using graph or hypergraph autoencoders. On all of the networks, PHPool consistently outperforms the other two benchmarks. We first observe the reconstruction results for TopKPool suffers from the most severe structural damage on all of these networks. On the ring, grid, and pyramid datasets, partial structures are retained, but for community networks that are common structures in real-world dataset, TopKPool generates random coordinates that make no sense. This is not surprising because discarding nodes could cause sparse neighboring aggregation and harm the eventual coordinate prediction. With respect to MinCutPool, we can see that it has improved upon TopKPool in the sense that it retains the basic shape of the original topology. However, we can see that on the boundaries of the grid and the pyramid network, MinCutPool still suffers from local structural distortion. What's more, as a node-cluster pooing method, MinCutPool widely spreads the node locations, adding more noise to the node coordinates especially those between clusters in the community network. Imaging if k-means clustering is applied based on the node coordinates, then there will be more nodes that lie on the border that are hard to cluster. This validates the assumption that sequentially generating pooling operators lead to a larger information loss during data processing. The superior performance of PHPool demonstrates the effectiveness of our parallel-pooling design, suggesting the original network structure is well preserved in the embedding space. 

\begin{table*}[ht]
\begin{center}
\caption{Average testing accuracy ($\%$, $\pm$ standard deviation) on 9 real-world datasets for hypergraph classification. The best results are highlighted for each dataset.}
\label{tab:graph_class_result}
\resizebox{\textwidth}{!}{%
\begin{tabular}{ccccccccc}
\hline
& & \multicolumn{3}{c}{Social Network} & \multicolumn{4}{c}{Bioinformatics} \\
\hline
& & IMDB-BINARY & IMDB-MULTI & COLLAB & MUTAG & PROTEINS & D\&D & NCI1 \\
\# Graphs & & 1,000 & 1,500 & 5,000 & 188 & 1,113 & 1,178 & 4,110 \\
\# Classes & & 2 & 3 & 3 & 2 & 2 & 2 & 2 \\
Avg. \# Nodes & & 19.8 & 13.0 & 74.5 & 17.9 & 39.1 & 284.3 & 29.8 \\
\hline

\multirow{5}{*}{\begin{tabular}{c}Conv. \\Networks\end{tabular}}
& $\mathrm{GCN}$ & $73.26 \pm 0.46$ & $50.39 \pm 0.41$ & $80.59 \pm 0.27$ & $69.50 \pm 1.78$ & $73.24 \pm 0.73$ & $72.05 \pm 0.55$ & $76.29 \pm 1.79$ \\
& HGNN & $72.78 \pm 0.86$ & $48.13 \pm 1.36$ & $78.19 \pm 0.63$ & $71.29 \pm 1.38$ & $71.46 \pm 1.66$ & $73.79 \pm 1.17$ & $75.00 \pm 1.40$ \\
& HGAT & $74.02 \pm 0.32$ & $50.87 \pm 0.62$ & $81.44 \pm 0.37$ & $74.53 \pm 0.98$ & $71.52 \pm 0.74$ & $73.98 \pm 0.57$ & $76.58 \pm 0.85$ \\
& HNHN & $67.43 \pm 0.79$ & $50.93 \pm 0.83$ & $79.67 \pm 0.42$ & $73.88 \pm 1.74$ & $72.13 \pm 1.24$ & $73.58 \pm 1.26$ & $75.27 \pm 1.05$ \\
& HGXConv & $73.34 \pm 0.65$ & $50.76 \pm 0.83$ & $81.24 \pm 0.43$ & $73.37 \pm 1.09$ & $73.92 \pm 0.98$ & $73.67 \pm 0.88$ & $76.05 \pm 0.73$ \\
\hline

\multirow{6}{*}{\begin{tabular}{c}Graph \\Pooling \\  w. Default \\Backbones\end{tabular}}
& ASAP+GCN & $72.81 \pm 0.50$ & $50.78 \pm 0.75$ & $78.64 \pm 0.50$ & $77.83 \pm 1.49$ & $73.92 \pm 0.63$ & $76.58 \pm 1.04$ & $71.48 \pm 0.42$ \\
& SAGPool+GCN & $72.55 \pm 1.28$ & $50.23 \pm 0.44$ & $78.03 \pm 0.31$ & $73.67 \pm 4.28$ & $71.56 \pm 1.49$ & $74.72 \pm 0.82$ & $67.45 \pm 1.11$ \\
& TopKPool+GCN & $71.58 \pm 0.95$ & $48.59 \pm 0.72$ & $77.58 \pm 0.85$ & $67.61 \pm 3.36$ & $70.48 \pm 1.01$ & $73.63 \pm 0.55$ & $67.02 \pm 2.25$ \\
& DiffPool+GraphSage & $73.14 \pm 0.70$ & $51.31 \pm 0.72$ & $78.68 \pm 0.43$ & $79.22 \pm 1.02$ & $73.03 \pm 1.00$ & $77.56 \pm 0.41$ & $62.32 \pm 1.90$ \\
& MinCutPool+GCN & $72.65 \pm 0.75$ & $51.04 \pm 0.70$ & $80.87 \pm 0.34$ & $79.17 \pm 1.64$ & $74.72 \pm 0.48$ & $78.22 \pm 0.67$ & $74.25 \pm 0.86$ \\
& SEPool+GCN & $74.12 \pm 0.56$ & $51.53 \pm 0.65$ & $81.28 \pm 0.15$ & $85.56 \pm 1.09$ & $76.42 \pm 0.39$ & $77.98 \pm 0.57$ & $78.35 \pm 0.33$ \\
\hline

\multirow{3}{*}{\begin{tabular}{c}Graph \\pooling w.\\HGXConv\end{tabular}}
&ASAP+HGXConv & $73.66 \pm 0.43$ & $51.27 \pm 0.63$ & $79.31 \pm 0.48$ & $79.83 \pm 1.77$ & $74.55 \pm 0.77$ & $76.93 \pm 0.62$ & $72.81 \pm 1.03$ \\
&TopKPool+HGXConv & $72.31 \pm 0.44$ & $50.83 \pm 0.98$ & $68.55 \pm 0.50$ & $69.88 \pm 1.98$ & $71.56 \pm 0.83$ & $75.66 \pm 0.73$ & $68.23 \pm 1.11$ \\
&SEPool+HGXConv & $74.31 \pm 0.53$ & $51.73 \pm 0.71$ & $80.73 \pm 0.44$ & $85.02 \pm 1.33$ & $77.31 \pm 0.44$ & $76.93 \pm 0.56$ & $78.88 \pm 0.98$ \\
\hline

\multirow{3}{*}{\begin{tabular}{c}PHPool w.\\Conv. \\Networks\end{tabular}}
&PHPool+HGNN & $74.88 \pm 0.59$ & $53.21 \pm 0.77$ & $81.98 \pm 0.49$ & $85.33 \pm 1.54$ & $73.88 \pm 0.59$ & $77.93 \pm 0.64$ & $79.93 \pm 1.41$ \\
&PHPool+HGAT & $75.36 \pm 0.36$ & $53.78 \pm 0.69$ & $82.33 \pm 0.44$ & $\mathbf{86.73 \pm 1.39}$ & $\mathbf{78.33 \pm 0.48}$ & $77.43 \pm 0.58$ & $81.23 \pm 0.94$ \\
&PHPool+HNHN & $72.67 \pm 0.53$ & $52.81 \pm 0.73$ & $81.04 \pm 0.52$ & $84.83 \pm 1.60$ & $74.32 \pm 0.54$ & $74.58 \pm 0.55$ & $80.02 \pm 0.96$ \\
\hline

\multirow{1}{*}{\begin{tabular}{c}HGUN(Enc)\end{tabular}}

& PHPool+HGXConv & $\mathbf{76.53 \pm 0.48}$ & $\mathbf{55.31 \pm 0.61}$ & $\mathbf{83.26 \pm 0.53}$ & $86.42 \pm 1.65$ & $78.06 \pm 0.56$ & $\mathbf{79.88 \pm 0.54}$ & $\mathbf{81.89 \pm 1.03}$ \\
\hline

\end{tabular}
}
\end{center}
\end{table*}

\subsection{Hypergraph Classification}\label{subsec:hypergraph_classification}
Hypergraph classification aims to predict the label associated with an entire hypergraph. Applications of hypergraph classification include social network classification (e.g., type of an ego-network), molecule classification (e.g., mutagenetic or not), etc. Datasets for hypergraph classification usually contain many small-sized hypergraphs and can be easily trained in batches.

{\bf Dataset.} We select seven datasets from TU datasets~\cite{morris2020tudataset} for hypergraph classification, they include: 

\noindent {\bf Social network dataset.}
IMDB-BINARY and IMDB-MULTI are datasets that revolve around movie collaborations. Each dataset represents movies as graphs where vertices correspond to actors/actresses. An edge is established between two actors/actresses if they appear together in another film. IMDB-BINARY focuses on movies from two genres—Action and Romance—while IMDB-MULTI includes an additional genre, Sci-Fi, alongside Comedy and Romance, making it a multiclass dataset. Similarly, the COLLAB dataset is a collaboration dataset in scientific domains. Each hypergraph in this dataset represents the ego network of a researcher, with edges denoting collaboration between two researchers. Each researcher's ego network is classified into one of three possible fields: High Energy Physics, Condensed Matter Physics, or Astro Physics, reflecting the specific area of physics to which the researcher is affiliated.

\noindent {\bf Biochemical dataset.} The MUTAG dataset includes datasets of mutagenic aromatic and heteroaromatic nitro compounds and the goal is to predict their mutagenicity. The PROTEINS dataset, on the other hand, contains nodes that represent secondary structure elements (SSEs) of proteins. Connections between nodes signify neighboring elements in the amino acid sequence or three-dimensional space. The task of the PROTEINS dataset is to classify a network as one of the three types: helix, sheet, and turn. The D\&D dataset consists of graphs that depict protein structures, where each node represents an amino acid. Edges between nodes are established if the distance between two amino acids is less than 6 angstroms. Each protein in the dataset is labeled as either an enzyme or a non-enzyme, indicating its functional category. NCI1 is a dataset collected by the National Cancer Institute (NCI) and contains chemical compounds for anti-cancer screens where the task is to classify chemicals as positive or negative to cell lung cancer.

\noindent {\bf Conversion of graphs to hypergraphs.} The initial datasets are structured as graphs, representing network structures. While there are studies that learn hypergraph topology from node features~\cite{pena2023learning, tang2023learning}, node features are not always available in datasets. As a result, we choose to simply elevate graph structures to hypergraphs by identifying higher-order relationships within them. In the three social network datasets, hyperedges are formulated for groups of nodes that are mutually connected (i.e., forming cliques). For the four biochemical datasets, the significance of ring structures in molecules for identifying their properties has been highlighted in previous studies~\cite{debnath1991structure, konstantinova1998graph} Consequently, we construct hyperedges based on these ring structures in the biochemical datasets.

{\bf Baselines.}  Without pooling layers, a hypergraph convolutional neural network can only learn node embeddings, and a common practice to make hypergraph-level predictions is to summarize the node embeddings through a sum or average readout function at the end of the neural network. We include four convolutional hypergraph neural networks: HGNN~\cite{HGNN}, HNHN~\cite{hnhn}, HGAT.~\cite{bai2021hypergraph} and the proposed HGXConv as our baselines. Since these datasets are initially constructed as simple graphs, we also include a simple graph neural network (GCN~\cite{GCN}). We refer to these (hyper)graph convolutional neural networks as backbone models because they are widely used in autoencoder architectures. To validate PHPool, we then consider three node-drop pooling methods as well as three node-cluster pooling methods.

\noindent {\bf Node-drop pooling methods.}
\begin{itemize}[leftmargin=*]
\item TopKPool~\cite{graphunet} selects nodes according to an importance score learned by the product of the node features and a trainable projection vector. 
    \item   SAGPool~\cite{lee2019self} replaces the projection vector with an output of a self-attention message-passing layer, aiming to take into account the graph structure during score learning.
    \item  ASAP~\cite{ranjan2020asap} considers a fixed radius of each node as a cluster and learns a score function to sample a subset of each node neighborhood through local convolution. Though it makes use of the cluster assignment idea when learning the score function, it still falls into the node-drop pooling category because it discards nodes with lower scores.
\end{itemize}

\noindent {\bf Node-cluster pooling methods.}
\begin{itemize}[leftmargin=*]
    \item DiffPool~\cite{ying2018hierarchical} employs graph neural networks to learn a cluster assignment matrix, supplemented by an auxiliary link prediction loss to encourage the formation of clusters on neighboring nodes.
    \item MinCutPool~\cite{bianchi2020spectral} relaxes the learning objective of spectral clustering and includes it in the loss function. The cluster assignment is learned by multi-layer perceptrons.
    \item SEPool~\cite{SEP} constructs coding trees for graphs by minimizing structural entropy at specific depth levels, any two consecutive layers of the coding tree define a cluster assignment mapping.
\end{itemize}

{\bf Experiment setup.} We follow previous work~\cite{bianchi2020spectral, SEP} and evaluate our model performance with a 10-fold cross-validation. Based on the training/testing split provided by~\cite{errica2019fair}, we hold $10\%$ of the training data as a validation set to select the best model hyperparameters. Specifically, the number of depth for all models are set to $3$, and the pooling ratio is $25\%$ for the models that require a pooling ratio.  The hidden dimension is tuned among $\{32, 64, 128, 256\}$, the batch size is selected from $\{16, 64, 128\}$, and the dropout rate is $0$ or $0.5$. The neural network is optimized with an Adam optimizer, with a learning rate $5\times 10^{-4}$ and weight decay $1\times 10^{-4}$. An early stopping mechanism is used to avoid overfitting, if the model performance does not approve on the validation set for $100$ consecutive epochs, we terminate the training procedure. For each fold of the dataset, we repeat the experiment $10$ times with different seeds, so overall a total of $100$ experiments are run on a dataset.

\noindent \textbf{Results and discussion.}

Table~\ref{tab:graph_class_result} summarizes the hypergraph classification results on social network and bioinformatics datasets. Overall, HGUN(Enc) (i.e., PHPool+HGXConv) exhibits consistently strong performance across benchmarks and ranks first on 5 out of 7 datasets. On the two bioinformatics datasets MUTAG and PROTEINS, the top-performing model is PHPool+HGAT (86.73 on MUTAG and 78.33 on PROTEINS), while HGUN(Enc) remains a close second (85.42 on MUTAG and 78.06 on PROTEINS). This gap suggests that these two molecular datasets may benefit from more expressive and fine-grained interaction weighting, where attention mechanisms can better adaptively emphasize informative node and hyperedge relations (e.g., different bond types or functional groups). Importantly, even without attention, PHPool+HGXConv remains comparable to PHPool+HGAT, indicating that (i) PHPool effectively captures global hierarchical representations at multiple scales, and (ii) the multiplicative higher-order mixing in HGXConv can still provide competitive expressiveness when coupled with hierarchical pooling.

A key takeaway from Table~\ref{tab:graph_class_result} is that PHPool consistently unlocks global, multi-resolution structure information and improves performance across different hypergraph convolution backbones. Comparing each convolution network against its PHPool counterpart shows that PHPool provides substantial gains, particularly on bioinformatics datasets. For example, HGAT alone achieves 74.53 on MUTAG, whereas PHPool+HGAT improves it to 86.73, demonstrating the effectiveness of PHPool in extracting task-relevant hierarchical structure beyond what a flat convolution can capture. Similar improvements are observed for other convolution backbones, reinforcing that the benefits of PHPool are not tied to a specific convolution design but rather stem from modularity-guided parallel hierarchical coarsening.

We further validate the advantage of PHPool by controlling the convolution backbone in pooling baselines. When equipping other SOTA pooling methods with HGXConv (e.g., ASAP/TopK/SEPool+HGXConv), PHPool+HGXConv achieves the best performance among these HGXConv-based pooling variants, supporting the superiority of PHPool as a hypergraph pooling operator. Finally, while HGXConv itself is not always the strongest standalone hypergraph convolution, combining HGXConv with PHPool yields the best overall performance on most benchmarks. This supports our positioning that PHPool contributes the most significant gains in the HGUN architecture, while HGXConv provides an additional, complementary inductive bias for modeling higher-order interactions through multiplicative mixing.

\subsection{Depth Sensitivity for Hypergraph Classification}\label{subsec: hypergraph_classification_depth}
We examine pooling depth effects across COLLAB and D\&D. We choose COLLAB because it contains the largest number of graphs and D\&D because it has the largest average number of nodes. Fig.~\ref{fig:graph_classification_depth_effect} shows optimal performance at depth 3 for all methods. Beyond this depth, COLLAB exhibits sharp degradation at depth 5, likely from over-compression of dense community structures, while D\&D degrades more gradually, suggesting hierarchical abstraction benefits larger, sparser graphs. Critically, TopK's node-drop strategy collapses on D\&D at depth 4, dropping 11 points. We suspect this could be due to cascading selection errors where irreversible node dropping compounds information loss. In contrast, node-clustering methods (HGUN, MinCut) maintain stability through soft aggregation.
\begin{figure}[ht]
  \centering
  \includegraphics[width=\linewidth]{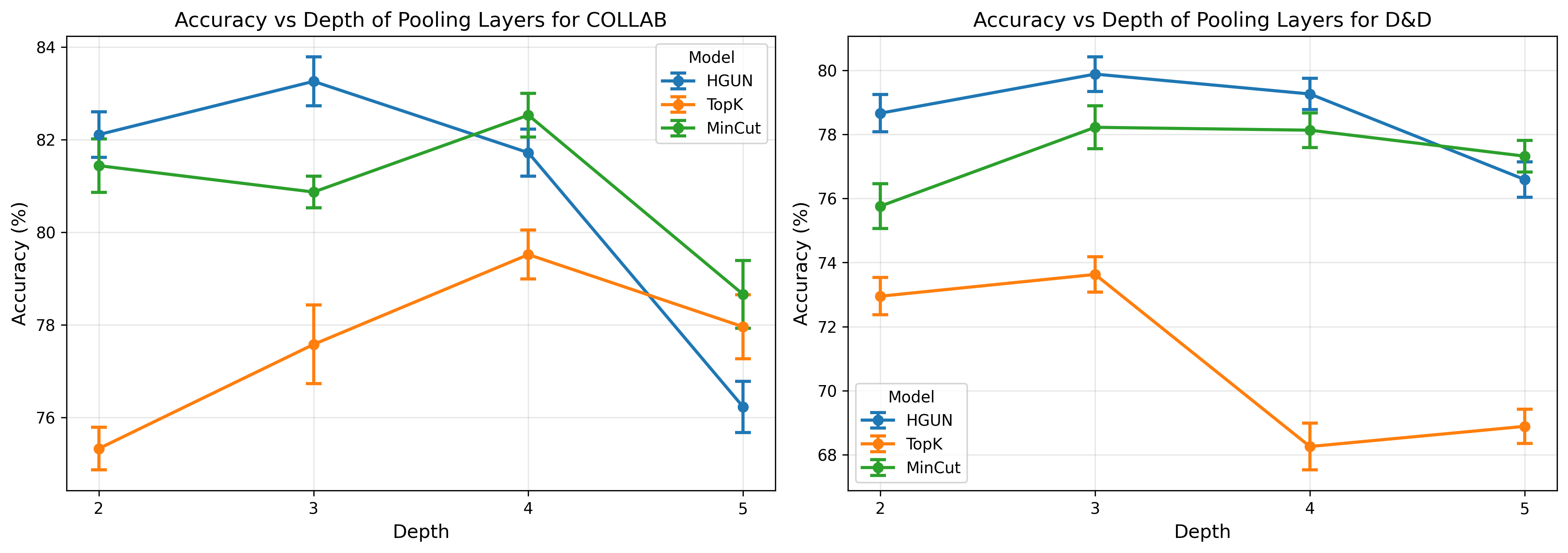}
  \vspace{-0.7cm}
\caption{Classification accuracy w.r.t depths on COLLAB and D\&D datasets for hypergraph classification. Comparison between HGUN encoder and 2 graph pooling methods.}
  \label{fig:graph_classification_depth_effect}
\end{figure}

\begin{table*}[h!]
    \centering
    \caption{Experiment results for node anomaly detection. The best average metrics are highlighted for each dataset.}
\begin{tabular}{c|l|cccc|cccc}
\hline & Dataset & \multicolumn{4}{c}{ Yelp } & \multicolumn{4}{c}{ Amazon } \\
\hline 
 Model & Metric & Precision & GMean & F1-macro & Recall & Precision & GMean & F1-macro & Recall \\
\hline 


\multirow{4}{*}{\begin{tabular}{c} 
GNNs
\end{tabular}} 
& GCN & 0.3155 & 0.4946 & 0.3667 & 0.7753 & 0.6793 & 0.7372 & 0.5643 & 0.8000 \\
 &GAT & 0.4542 & 0.5313 & 0.4277 & 0.6215 & 0.8619 & 0.8304 & 0.7146 & 0.8000 \\
& GPRGNN & 0.6489 & 0.6984 & 0.5734 & 0.7516 & 0.8461 & 0.8232 & 0.6415 & 0.8009 \\
& FAGCN & 0.7108 & 0.7086 & 0.6111 & 0.7064 & 0.8757 & 0.8433 & $0.7330$ & 0.8121 \\
\hline 

\multirow{4}{*}{\begin{tabular}{c} 
AD \\
GNNs
\end{tabular}} 
& CARE-GNN & 0.6943 & 0.7086 & 0.6040 & 0.7232 & $0.8813$ & 0.8171 & 0.7245 & 0.7576 \\
& RioGNN & 0.6882 & 0.7352 & 0.6161 & 0.7854 & 0.8609 & 0.8204 & 0.7262 & 0.7818 \\
& H²-FDetector & 0.7877 & 0.8164 & 0.7078 & $\mathbf{0.8461}$ & 0.8652 & 0.8429 & 0.7236 & $\mathbf{0.8394}$  \\
& BWGNN & $0.8128$ & $0.8192$ & $0.7232$ & $0.8256$ & $0.8541$ & $0.8467$ & 0.7163 & $0.8212$ \\
\hline
\multirow{2}{*}{\begin{tabular}{c} 
Graph U-Nets\\
\end{tabular}} 
& TopK + GCN& 0.7033 & 0.7335 & 0.7133 & 0.6779  & 0.8226 & 0.8180 & 0.7023 & 0.7697 \\
 &MinCut + GCN& 0.8372 & 0.7623 & 0.7281
 & 0.6941 & 0.8136 & 0.8276 & 0.7244 & 0.7606 \\
\hline 
\multirow{4}{*}{\begin{tabular}{c} 
HyperGNNs\\
\end{tabular}} 
& HGNN & 0.4173 & 0.5650 & 0.3421 & 0.7650 & 0.7531 & 0.7704 & 0.6025 & 0.7880 \\
& HNHN& 0.6573 & 0.6395 & 0.6528 & 0.6221 & 0.8024 & 0.8101 & 0.7129 & 0.8179 \\
& HGAT & 0.4093 & 0.4751 & 0.4698 & 0.5515 & 0.7459 & 0.7688 & 0.6933 & 0.7922 \\
 &HGXConv& 0.6863 & 0.7237 & 0.7227 & 0.7632 & 0.8372 & 0.8291 & 0.7090 & 0.8211 \\
 \hline
 \multirow{1}{*}{\begin{tabular}{c} 
HyperGraph U-Net \\
\end{tabular}} 
&HGUN & $\mathbf{0.8523}$ & $\mathbf{0.8272}$ & $\mathbf{0.7732}$ & 0.8026 & $\mathbf{0.8867}$ & $\mathbf{0.8532}$ & $\mathbf{0.7683}$ & 0.8210 \\

\hline
\end{tabular}
    \label{table:AD_result}
\end{table*}

\subsection{Anomaly Detection with Hypergraph U-Nets}\label{subsec:anomaly}

{\bf Dataset.} Introduced in~\cite{dou2020enhancing}, the Yelp and Amazon datasets have been widely used to evaluate node anomaly detection. The Yelp dataset treats hotel and restaurant reviews as nodes, and the task is to identify spam and legitimate reviews. It is a labeled dataset and the labels are retrieved from Yelp's filtering system. Each node has $32$ handcrafted features. The Yelp dataset contains three relationships: 1) R-U-R connects reviews posted by the same user, 2) R-S-R connects reviews
under the same product with the same star rating, 3) R-T-R connects reviews under the same product posted in the same month. 

The Amazon dataset includes users who have posted product reviews. A user is labeled as benign if he/she obtains more than $80\%$ helpful votes, and is labeled as fraudulent if there are less than $20\%$ helpful votes. One of the node features for the Amazon dataset is the minimum number of unhelpful votes, which introduces data leakage to predict anomaly users. As a result, we deleted this feature during data processing and the resulting $24$-dimensional node features.  The Amazon dataset has three relations: 1) U-P-U connects users reviewing at least one same product; 2) U-S-V connects users having at least one same star rating within one week; 3) U-V-U connects users with the top $5\%$ mutual review text similarities. In our implementation, we combine these three relationships for each dataset and treat them as homogeneous networks.

\begin{table}[htp]
\begin{center}
\caption{Summary Statistics of Anomaly Node Detection Datasets}
\resizebox{0.48\textwidth}{!}{
    \begin{tabular}{cccccc}
\hline
    & $\mathcal{V}$ & $\%$ anomaly & $|\mathcal{E}|$  & Feat Sim & Label Sim  \\
    \hline
Yelp  & $45,954$      & $ 14.5$ & 3,846,979  & $0.77$ & $0.07$  \\
Amazon & $11,944 $     & $9.5$ &4,398,392  & $0.65$ & $0.05$ \\
\hline
\end{tabular}}\label{table: data_stat2}
\end{center}
\end{table}

\noindent {\bf Baselines.} We choose $4$ types of baselines to compare with our model: classic GNNs, GNNs specifically developed for anomaly detection, graph u-nets, and HyperGNNs.

\noindent {\bf GNNs}: GCN~\cite{GCN} and GAT~\cite{GAT} are popular GNNs for homophilic graphs. GPR-GNN~\cite{chien2020adaptive} and FAGCN~\cite{bo2021beyond} are GNNs optimized for heterophilic graphs. GPR-GNN combines an adaptive generalized PageRank scheme with GNNs. FAGCN adaptively changes the proportion of low- and high-frequency signals with a self-gating mechanism.

\noindent {\bf GNNs-based Fraud Detection:} CARE-GNN~\cite{dou2020enhancing} and RioGNN~\cite{peng2021reinforced} are multi-relation GNNs designed for fraud detection that sample and aggregate features of similar neighbors. $\text{H}^2$-FDetector~\cite{shi2022h2} is a multi-relation GNN model that considers both homophilic and heterophilic connections. BWGNN~\cite{tang2022rethinking} proposes Beta Wavelet GNN for anomaly detection on multi-relation graphs.

{\bf Experiment setup.} Our main PHPool setting (used in the hypergraph-classification experiments of Section~\ref{subsec:hypergraph_classification}) selects layer-wise cluster assignments by modularity-guided thresholds and therefore does not require a manually specified pooling ratio. For the large-scale anomaly-detection benchmarks here, however, we adopt a simplified PHPool variant as a scalable approximation: instead of evaluating thresholds on the dendrogram and performing greedy selection, we set a fixed pooling ratio of $50\%$, reducing the number of nodes by half at each layer. This choice is based on the observation that, on hypergraphs of the size of Yelp and Amazon, modularity-refined threshold selection has minimal impact on cluster quality and downstream performance, while uniform halving substantially reduces preprocessing cost. Using the same split introduced by~\cite{dou2020enhancing}, we ran each experiment with different seeds for $10$ times and reported the average metrics. 

The baseline GNNs are implemented using the best hyperparameters from the original papers. The graph u-nets with TopKPooling~\cite{graphunet}, and MinCutPooling~\cite{bianchi2020spectral} are implemented with GCN as backbone models. To make a fair comparison, the depths of the graph u-nets and hypergraph u-nets are set to $3$, which means that there are two pooling layers and two corresponding unpooling layers. For hyperparameters, the hidden dimension is tuned among $\{32, 64, 128, 256\}$, and the dropout rate is either $0$ or $0.5$. The neural network is optimized with an Adam optimizer, with a learning rate $0.01$ for Yelp and $0.001$ for Amazon. The weight decay rate is $5\times 10^{-5}$ for both datasets.

{\bf Results and Discussion.} The results for node anomaly detection are presented in Table~\ref{table:AD_result}. HGUN outperforms other methods on the highly imbalanced Yelp and Amazon datasets in terms of precision, weighted F1-macro, and the geometric mean of recall and precision. This indicates that the node embeddings generated by HGUN, enhanced with pooling and unpooling operations, capture more discriminative node features. A possible reason is that varying the size of the hypergraph helps mitigate the problem of oversmoothing~\cite{predict_propagate} by expanding the receptive fields. However, HGUN slightly underperforms GNNs specifically designed for anomaly detection in terms of recall. This is expected, as anomaly-based GNNs often utilize additional information, such as node label heterogeneity, which is crucial for identifying anomalous nodes. It should also be noted that both HGXConv and PHPool leverage local node similarities. The lower recall suggests that HGUN may miss some true anomalies, likely due to the model's focus on capturing grouping structures within hypergraphs.

\subsection{Depth Sensitivity for Node Anomaly Detection}\label{subsec: depth_sensitivi}
We analyze the effect of pooling layer depth on model performance. As illustrated in Fig.~\ref{fig:depth_effect}, optimal performance occurs at moderate depths: depth 4 for Yelp (HGUN F1=0.7862) and depth 3 for Amazon (HGUN F1=0.7683). Beyond these points, all models experience performance degradation, with depth 6 causing significant drops across both datasets. This pattern suggests that while hierarchical pooling captures multi-scale graph structures effectively at moderate depths (3-4 layers), excessive depth leads to over-compression and loss of discriminative local structure. HGUN demonstrates superior depth robustness compared to MinCut and TopK, maintaining stable performance across depths 3-5. TopK exhibits the most severe degradation (Yelp: $0.730 \to 0.687$ from depth 2 to 6), indicating limited capacity for deep hierarchical learning. These results emphasize the critical role of depth selection in graph pooling architectures.
\begin{figure}[ht]
  \centering
  \includegraphics[width=\linewidth]{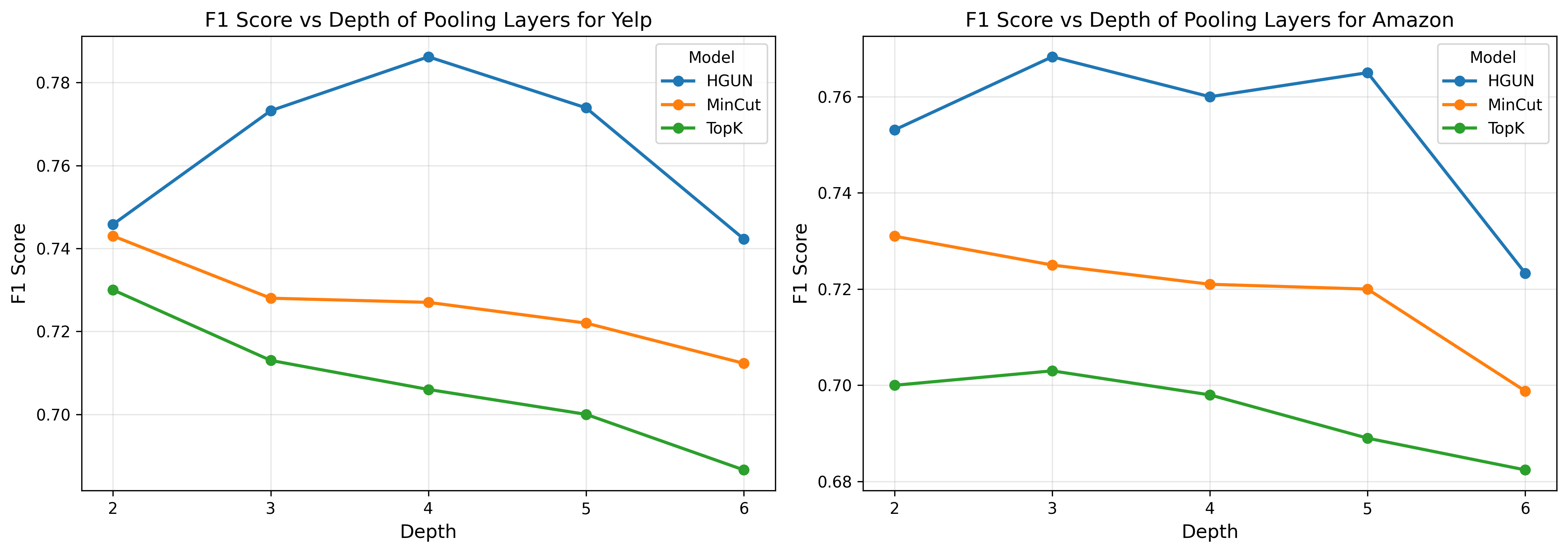}
  \vspace{-0.7cm}
\caption{F1 scores w.r.t depths on Yelp and Amazon datasets for node classification. Comparison between (hyper)graph autoencoders with various pooling.}
  \label{fig:depth_effect}
\end{figure}

{

\subsection{Runtime Comparison}\label{subsec: runtime_comparison}
  In addition to performance gains, hypergraph pooling offers computational efficiency benefits. Table~\ref{table: runtime} compares wall-clock runtime between HGUN and HGXConv on node anomaly detection. For fair comparison, HGXConv uses 6 convolutional layers to match HGUN's depth (3 PHPool + HGXConv blocks and 3 PHUnPool + HGXConv layers).
  HGUN achieves speedups of 13.13\% on Yelp and 18.82\% on Amazon. This efficiency stems from progressive graph coarsening, which reduces nodes and hyperedges in intermediate layers, lowering computational costs.
}

\begin{table}[htp]
\begin{center}
\caption{Wall-clock runtime per epoch (seconds) on node anomaly detection with matched depth. HGXConv uses no pooling; preprocessing time for PHPool is excluded.}
\resizebox{0.35\textwidth}{!}{
    \begin{tabular}{cccc}
\hline
   Dataset & HGUN & HGXConv & Speedup (\%)  \\
    \hline
Yelp  & 20.37      & 23.45 & 13.13\%   \\
Amazon & 6.73     &  8.29& 18.82\% \\
\hline
\end{tabular}}\label{table: runtime}
\end{center}
\end{table}

\section{Conclusion and Future Work}
In this work, we propose a new hypergraph neural network architecture: HGUN for node- and hypergraph-level tasks. The basic building blocks in the HGUN consist of hierarchical hypergraph pooling (PHPool) and hypergraph cross-convolution (HGXConv). The PHPool is formulated by greedily selecting clusters that produce the maximum modularity in a hierarchical dendrogram, constructing modularity-guided pooling layers that are robust to the number of layers. The HGXConv leverages cross-node multiplicative interactions and generalizes it to a node-hyperedge-node propagation scheme. Theoretically, we prove that the proposed PHPool at least retains the same expressiveness of the preceding convolution layers. Through experiments, we observe that HGUN achieves consistently competitive performance in hypergraph reconstruction, hypergraph classification, and node anomaly detection. In addition, we find that HGXConv achieves better performance than the other two-stage message-passing HyperGNNs, and the PHPool further promotes the expressiveness of the overall neural network. HGUN shows its potential to improve other hypergraph neural networks by investigating the autoencoder architecture.

\textbf{Limitations.} While HGUN demonstrates strong performance across diverse tasks, we note several aspects for future improvement. Our current approach employs hard clustering assignments determined in a preprocessing step, which ensures global optimization and computational efficiency but does not adapt to task-specific node features during training. Additionally, the $O(N^2)$ dendrogram construction can be expensive for very large hypergraphs, though approximation algorithms~\cite{moseley2023approximation, charikar2017approximate} and distributed methods~\cite{bateni2017affinity} can mitigate this cost. Finally, hypergraphs with overlapping community structures or unusual topology may benefit from soft clustering or alternative quality metrics beyond modularity.

  Interesting future work includes: 1) Learning soft, hierarchy-aware cluster assignments that explicitly incorporate node features when forming clusters. 2) Reducing the $O(N^2)$ computational cost of dendrogram construction by replacing precomputed clustering with trainable clustering assignment matrices optimized end-to-end under a global hierarchical objective.

\printbibliography

@article{HGNN,
  title={Hypergraph neural networks},
  author={Feng, Yifan and You, Haoxuan and Zhang, Zizhao and Ji, Rongrong and Gao, Yue},
  journal={Proceedings of the AAAI Conference on Artificial Intelligence},
  volume={33},
  number={01},
  pages={3558--3565},
  year={2019}
}

@article{wang2024t,
  title={T-HyperGNNs: Hypergraph neural networks via tensor representations},
  author={Wang, Fuli and Pena-Pena, Karelia and Qian, Wei and Arce, Gonzalo R},
  journal={IEEE Transactions on Neural Networks and Learning Systems},
  pages={1-15},
  year={2024},
doi={10.1109/TNNLS.2024.3371382},
  publisher={IEEE}
}

@article{tang2023learning,
  title={Learning hypergraphs from signals with dual smoothness prior},
  author={Tang, Bohan and Chen, Siheng and Dong, Xiaowen},
  journal={ICASSP 2023-2023 IEEE International Conference on Acoustics, Speech and Signal Processing (ICASSP)},
  pages={1--5},
  year={2023},
  organization={IEEE}
}

@article{SEP,
  title={Structural entropy guided graph hierarchical pooling},
  author={Wu, Junran and Chen, Xueyuan and Xu, Ke and Li, Shangzhe},
  journal={International conference on machine learning},
  pages={24017--24030},
  year={2022},
  organization={PMLR}
}

@article{errica2019fair,
  title={A fair comparison of graph neural networks for graph classification},
  author={Errica, Federico and Podda, Marco and Bacciu, Davide and Micheli, Alessio},
  journal={arXiv preprint arXiv:1912.09893},
  year={2019}
}

@article{dou2020enhancing,
  title={Enhancing graph neural network-based fraud detectors against camouflaged fraudsters},
  author={Dou, Yingtong and Liu, Zhiwei and Sun, Li and Deng, Yutong and Peng, Hao and Yu, Philip S},
  journal={Proceedings of the 29th ACM international conference on information \& knowledge management},
  pages={315--324},
  year={2020}
}

@article{ying2018hierarchical,
  title={Hierarchical graph representation learning with differentiable pooling},
  author={Ying, Zhitao and You, Jiaxuan and Morris, Christopher and Ren, Xiang and Hamilton, Will and Leskovec, Jure},
  journal={Advances in neural information processing systems},
  volume={31},
  year={2018}
}

@article{lee2019self,
  title={Self-attention graph pooling},
  author={Lee, Junhyun and Lee, Inyeop and Kang, Jaewoo},
  journal={International conference on machine learning},
  pages={3734--3743},
  year={2019},
  organization={PMLR}
}

@article{ranjan2020asap,
  title={Asap: Adaptive structure aware pooling for learning hierarchical graph representations},
  author={Ranjan, Ekagra and Sanyal, Soumya and Talukdar, Partha},
  journal={Proceedings of the AAAI conference on artificial intelligence},
  volume={34},
  number={04},
  pages={5470--5477},
  year={2020}
}

@article{bianchi2020spectral,
  title={Spectral clustering with graph neural networks for graph pooling},
  author={Bianchi, Filippo Maria and Grattarola, Daniele and Alippi, Cesare},
  journal={International conference on machine learning},
  pages={874--883},
  year={2020},
  organization={PMLR}
}

@article{chien2020adaptive,
  title={Adaptive universal generalized pagerank graph neural network},
  author={Chien, Eli and Peng, Jianhao and Li, Pan and Milenkovic, Olgica},
  journal={arXiv preprint:2006.07988},
  year={2020}
}

@article{bo2021beyond,
  title={Beyond low-frequency information in graph convolutional networks},
  author={Bo, Deyu and Wang, Xiao and Shi, Chuan and Shen, Huawei},
  journal={Proceedings of the AAAI conference on artificial intelligence},
  volume={35},
  number={5},
  pages={3950--3957},
  year={2021}
}

@article{peng2021reinforced,
  title={Reinforced neighborhood selection guided multi-relational graph neural networks},
  author={Peng, Hao and Zhang, Ruitong and Dou, Yingtong and Yang, Renyu and Zhang, Jingyi and Yu, Philip S},
  journal={ACM Transactions on Information Systems (TOIS)},
  volume={40},
  number={4},
  pages={1--46},
  year={2021},
  publisher={ACM New York, NY}
}

@article{shi2022h2,
  title={H2-fdetector: A gnn-based fraud detector with homophilic and heterophilic connections},
  author={Shi, Fengzhao and Cao, Yanan and Shang, Yanmin and Zhou, Yuchen and Zhou, Chuan and Wu, Jia},
  journal={Proceedings of the ACM Web Conference 2022},
  pages={1486--1494},
  year={2022}
}

@article{predict_propagate,
  title={Predict then propagate: Graph neural networks meet personalized pagerank},
  author={Gasteiger, Johannes and Bojchevski, Aleksandar and G{\"u}nnemann, Stephan},
  journal={arXiv preprint arXiv:1810.05997},
  year={2018}
}

@article{tang2022rethinking,
  title={Rethinking graph neural networks for anomaly detection},
  author={Tang, Jianheng and Li, Jiajin and Gao, Ziqi and Li, Jia},
  journal={International Conference on Machine Learning},
  pages={21076--21089},
  year={2022},
  organization={PMLR}
}

@article{mullner2011modern,
  title={Modern hierarchical, agglomerative clustering algorithms},
  author={M{\"u}llner, Daniel},
  journal={arXiv preprint arXiv:1109.2378},
  year={2011}
}

@article{bar2001fast,
  title={Fast optimal leaf ordering for hierarchical clustering},
  author={Bar-Joseph, Ziv and Gifford, David K and Jaakkola, Tommi S},
  journal={Bioinformatics},
  volume={17},
  number={suppl\_1},
  pages={S22--S29},
  year={2001},
  publisher={Oxford University Press}
}

@article{debnath1991structure,
  title={Structure-activity relationship of mutagenic aromatic and heteroaromatic nitro compounds. correlation with molecular orbital energies and hydrophobicity},
  author={Debnath, Asim Kumar and Lopez de Compadre, Rosa L and Debnath, Gargi and Shusterman, Alan J and Hansch, Corwin},
  journal={Journal of medicinal chemistry},
  volume={34},
  number={2},
  pages={786--797},
  year={1991},
  publisher={ACS Publications}
}

@article{konstantinova1998graph,
  title={Graph and hypergraph models of molecular structure: A comparative analysis of indices},
  author={Konstantinova, EV and Skoroboratov, VA},
  journal={Journal of structural chemistry},
  volume={39},
  number={6},
  pages={958--966},
  year={1998},
  publisher={Springer}
}

@article{kumar2020new,
  title={A new measure of modularity in hypergraphs: Theoretical insights and implications for effective clustering},
  author={Kumar, Tarun and Vaidyanathan, Sankaran and Ananthapadmanabhan, Harini and Parthasarathy, Srinivasan and Ravindran, Balaraman},
  journal={Proceedings of the Eighth International Conference on Complex Networks and Their Applications },
  pages={286--297},
  year={2020},
  organization={Springer}
}

@article{bianchi2024expressive,
  title={The expressive power of pooling in graph neural networks},
  author={Bianchi, Filippo Maria and Lachi, Veronica},
  journal={Advances in Neural Information Processing Systems},
  volume={36},
  year={2024}
}

@article{hyperg_isomorphism,
  title={Hypergraph isomorphism computation},
  author={Feng, Yifan and Han, Jiashu and Ying, Shihui and Gao, Yue},
  journal={IEEE Transactions on Pattern Analysis and Machine Intelligence},
  year={2024},
  publisher={IEEE}
}

@article{graphunet,
  title={Graph u-nets},
  author={Gao, Hongyang and Ji, Shuiwang},
  journal={international conference on machine learning},
  pages={2083--2092},
  year={2019},
  organization={PMLR}
}

@article{hypergraph_structure_learning,
  title={Hypergraph Structure Learning for Hypergraph Neural Networks.},
  author={Cai, Derun and Song, Moxian and Sun, Chenxi and Zhang, Baofeng and Hong, Shenda and Li, Hongyan},
  journal={IJCAI},
  pages={1923--1929},
  year={2022}
}

@article{hnhn,
  title={Hnhn: Hypergraph networks with hyperedge neurons},
  author={Dong, Yihe and Sawin, Will and Bengio, Yoshua},
  journal={arXiv preprint arXiv:2006.12278},
  year={2020}
}

@article{gao2022hgnn+,
  title={HGNN+: General hypergraph neural networks},
  author={Gao, Yue and Feng, Yifan and Ji, Shuyi and Ji, Rongrong},
  journal={IEEE Transactions on Pattern Analysis and Machine Intelligence},
  volume={45},
  number={3},
  pages={3181--3199},
  year={2022},
  publisher={IEEE}
}

@article{HCHA,
  title={Hypergraph convolution and hypergraph attention},
  author={Bai, Song and Zhang, Feihu and Torr, Philip HS},
  journal={Pattern Recognition},
  volume={110},
  pages={107637},
  year={2021},
  publisher={Elsevier}
}

@article{zhou2004regularization,
  title={A regularization framework for learning from graph data},
  author={Zhou, Dengyong and Sch{\"o}lkopf, Bernhard},
  journal={ICML 2004 Workshop on Statistical Relational Learning and Its Connections to Other Fields (SRL 2004)},
  pages={132--137},
  year={2004}
}

@article{GCN,
  title={Modeling relational data with graph convolutional networks},
  author={Schlichtkrull, Michael and Kipf, Thomas N and Bloem, Peter and Van Den Berg, Rianne and Titov, Ivan and Welling, Max},
  journal={European semantic web conference},
  pages={593--607},
  year={2018},
  organization={Springer}
}

@article{GraphSage,
  title={Inductive representation learning on large graphs},
  author={Hamilton, Will and Ying, Zhitao and Leskovec, Jure},
  journal={Advances in neural information processing systems},
  volume={30},
  year={2017}
}

@article{CE_learning,
  title={Learning with hypergraphs: Clustering, classification, and embedding},
  author={Zhou, Dengyong and Huang, Jiayuan and Sch{\"o}lkopf, Bernhard},
  journal={Advances in neural information processing systems},
  pages={1601--1608},
  year={2007}
}

@article{wang2023unified,
  title={A unified view between tensor hypergraph neural networks and signal denoising},
  author={Wang, Fuli and Pena-Pena, Karelia and Qian, Wei and Arce, Gonzalo R},
  journal={2023 31st European Signal Processing Conference (EUSIPCO)},
  pages={1968--1972},
  year={2023},
  organization={IEEE}
}

@article{cliquealgo,
  title={Algorithm 457: finding all cliques of an undirected graph},
  author={Bron, Coen and Kerbosch, Joep},
  journal={Communications of the ACM},
  volume={16},
  number={9},
  pages={575--577},
  year={1973},
  publisher={ACM New York, NY, USA}
}

@article{pena2023learning,
  title={Learning Hypergraphs Tensor Representations from Data via t-HGSP},
  author={Pena-Pena, Karelia and Taipe, Lucas and Wang, Fuli and Lau, Daniel L and Arce, Gonzalo R},
  journal={IEEE Transactions on Signal and Information Processing over Networks},
  year={2023},
  publisher={IEEE}
}

@article{morris2020tudataset,
  title={Tudataset: A collection of benchmark datasets for learning with graphs},
  author={Morris, Christopher and Kriege, Nils M and Bause, Franka and Kersting, Kristian and Mutzel, Petra and Neumann, Marion},
  journal={arXiv preprint arXiv:2007.08663},
  year={2020}
}

@article{ronneberger2015u,
  title={U-net: Convolutional networks for biomedical image segmentation},
  author={Ronneberger, Olaf and Fischer, Philipp and Brox, Thomas},
  journal={Medical image computing and computer-assisted intervention},
  pages={234--241},
  year={2015},
  organization={Springer}
}

@article{hypergraph_review,
  title={Signal processing on higher-order networks: Livin’on the edge... and beyond},
  author={Schaub, Michael T and Zhu, Yu and Seby, Jean-Baptiste and Roddenberry, T Mitchell and Segarra, Santiago},
  journal={Signal Processing},
  volume={187},
  pages={108149},
  year={2021},
  publisher={Elsevier}
}

@article{32,
  title={Orthogonal decomposition of symmetric tensors},
  author={Robeva, Elina},
  journal={SIAM Journal on Matrix Analysis and Applications},
  volume={37},
  number={1},
  pages={86--102},
  year={2016},
  publisher={SIAM}
}

@article{huang2021unignn,
  title={Unignn: a unified framework for graph and hypergraph neural networks},
  author={Huang, Jing and Yang, Jie},
  journal={arXiv preprint arXiv:2105.00956},
  year={2021}
}

@article{GAT,
  title={Graph attention networks},
  author={Veli{\v{c}}kovi{\'c}, Petar and Cucurull, Guillem and Casanova, Arantxa and Romero, Adriana and Lio, Pietro and Bengio, Yoshua},
  journal={arXiv preprint arXiv:1710.10903},
  year={2017}
}

@article{Allset,
  title={You are allset: A multiset function framework for hypergraph neural networks},
  author={Chien, Eli and Pan, Chao and Peng, Jianhao and Milenkovic, Olgica},
  journal={arXiv preprint arXiv:2106.13264},
  year={2021}
}

@article{bai2021hypergraph,
  title={Hypergraph convolution and hypergraph attention},
  author={Bai, Song and Zhang, Feihu and Torr, Philip HS},
  journal={Pattern Recognition},
  volume={110},
  pages={107637},
  year={2021},
  publisher={Elsevier}
}

@article{la2022music,
  title={Music recommendation via hypergraph embedding},
  author={La Gatta, Valerio and Moscato, Vincenzo and Pennone, Mirko and Postiglione, Marco and Sperl{\'\i}, Giancarlo},
  journal={IEEE Transactions on Neural Networks and Learning systems},
  year={2022},
  publisher={IEEE}
}

@article{wu2020comprehensive,
  title={A comprehensive survey on graph neural networks},
  author={Wu, Zonghan and Pan, Shirui and Chen, Fengwen and Long, Guodong and Zhang, Chengqi and Philip, S Yu},
  journal={IEEE Transactions on Neural Networks and Learning Systems},
  volume={32},
  number={1},
  pages={4--24},
  year={2020},
  publisher={IEEE}
}

@article{wang2020haar,
  title={Haar graph pooling},
  author={Wang, Yu Guang and Li, Ming and Ma, Zheng and Montufar, Guido and Zhuang, Xiaosheng and Fan, Yanan},
  journal={International conference on machine learning},
  pages={9952--9962},
  year={2020},
  organization={PMLR}
}

@article{austin_modularity,
  title={Generative hypergraph clustering: From blockmodels to modularity},
  author={Chodrow, Philip S and Veldt, Nate and Benson, Austin R},
  journal={Science Advances},
  volume={7},
  number={28},
  pages={eabh1303},
  year={2021},
  publisher={American Association for the Advancement of Science}
}

@article{feng2023modularity,
  title={Modularity-based Hypergraph Clustering: Random Hypergraph Model, Hyperedge-cluster Relation, and Computation},
  author={Feng, Zijin and Qiao, Miao and Cheng, Hong},
  journal={Proceedings of the ACM on Management of Data},
  volume={1},
  number={3},
  pages={1--25},
  year={2023},
  publisher={ACM New York, NY, USA}
}

@article{li2017inhomogeneous,
  title={Inhomogeneous hypergraph clustering with applications},
  author={Li, Pan and Milenkovic, Olgica},
  journal={Advances in neural information processing systems},
  volume={30},
  year={2017}
}

@article{he2016deep,
  title={Deep residual learning for image recognition},
  author={He, Kaiming and Zhang, Xiangyu and Ren, Shaoqing and Sun, Jian},
  journal={Proceedings of the IEEE conference on computer vision and pattern recognition},
  pages={770--778},
  year={2016}
}

@article{wang2020second,
  title={Second-order pooling for graph neural networks},
  author={Wang, Zhengyang and Ji, Shuiwang},
  journal={IEEE Transactions on Pattern Analysis and Machine Intelligence},
  volume={45},
  number={6},
  pages={6870--6880},
  year={2020},
  publisher={IEEE}
}

@article{dijkstra2022note,
  title={A note on two problems in connexion with graphs},
  author={Dijkstra, Edsger W},
  journal={Edsger Wybe Dijkstra: his life, work, and legacy},
  pages={287--290},
  year={2022}
}

@article{hao2021hypergraph,
  title={Hypergraph neural network for skeleton-based action recognition},
  author={Hao, Xiaoke and Li, Jie and Guo, Yingchun and Jiang, Tao and Yu, Ming},
  journal={IEEE Transactions on Image Processing},
  volume={30},
  pages={2263--2275},
  year={2021},
  publisher={IEEE}
}

@article{liu2020semi,
  title={Semi-Dynamic Hypergraph Neural Network for 3D Pose Estimation.},
  author={Liu, Shengyuan and Lv, Pei and Zhang, Yuzhen and Fu, Jie and Cheng, Junjin and Li, Wanqing and Zhou, Bing and Xu, Mingliang},
  journal={IJCAI},
  pages={782--788},
  year={2020}
}

@article{xia2021self,
  title={Self-supervised hypergraph convolutional networks for session-based recommendation},
  author={Xia, Xin and Yin, Hongzhi and Yu, Junliang and Wang, Qinyong and Cui, Lizhen and Zhang, Xiangliang},
  journal={Proceedings of the AAAI conference on artificial intelligence},
  volume={35},
  number={5},
  pages={4503--4511},
  year={2021}
}

@article{vinas2023hypergraph,
  title={Hypergraph factorization for multi-tissue gene expression imputation},
  author={Vi{\~n}as, Ramon and Joshi, Chaitanya K and Georgiev, Dobrik and Lin, Phillip and Dumitrascu, Bianca and Gamazon, Eric R and Li{\`o}, Pietro},
  journal={Nature machine intelligence},
  volume={5},
  number={7},
  pages={739--753},
  year={2023},
  publisher={Nature Publishing Group UK London}
}

@article{gilmer2017neural,
  title={Neural message passing for quantum chemistry},
  author={Gilmer, Justin and Schoenholz, Samuel S and Riley, Patrick F and Vinyals, Oriol and Dahl, George E},
  journal={International conference on machine learning},
  pages={1263--1272},
  year={2017},
  organization={PMLR}
}

@article{fortunato2016community,
  title={Community detection in networks: A user guide},
  author={Fortunato, Santo and Hric, Darko},
  journal={Physics reports},
  volume={659},
  pages={1--44},
  year={2016},
  publisher={Elsevier}
}

@article{dhulipala2021hierarchical,
  title={Hierarchical agglomerative graph clustering in nearly-linear time},
  author={Dhulipala, Laxman and Eisenstat, David and {\L}{\k{a}}cki, Jakub and Mirrokni, Vahab and Shi, Jessica},
  journal={International conference on machine learning},
  pages={2676--2686},
  year={2021},
  organization={PMLR}
}

@article{dhulipala2023terahac,
  title={Terahac: Hierarchical agglomerative clustering of trillion-edge graphs},
  author={Dhulipala, Laxman and {\L}{\k{a}}cki, Jakub and Lee, Jason and Mirrokni, Vahab},
  journal={Proceedings of the ACM on Management of Data},
  volume={1},
  number={3},
  pages={1--27},
  year={2023},
  publisher={ACM New York, NY, USA}
}

@article{duval2022higher,
  title={Higher-order clustering and pooling for graph neural networks},
  author={Duval, Alexandre and Malliaros, Fragkiskos},
  journal={Proceedings of the 31st ACM international conference on information \& knowledge management},
  pages={426--435},
  year={2022}
}

@article{eliasof2023haar,
  title={Haar wavelet feature compression for quantized graph convolutional networks},
  author={Eliasof, Moshe and Bodner, Benjamin J and Treister, Eran},
  journal={IEEE Transactions on Neural Networks and Learning Systems},
  volume={35},
  number={4},
  pages={4542--4553},
  year={2023},
  publisher={IEEE}
}

@article{zhong2023hierarchical,
  title={Hierarchical message-passing graph neural networks},
  author={Zhong, Zhiqiang and Li, Cheng-Te and Pang, Jun},
  journal={Data Mining and Knowledge Discovery},
  volume={37},
  number={1},
  pages={381--408},
  year={2023},
  publisher={Springer}
}

@article{vonessen2024next,
  title={Next level message-passing with hierarchical support graphs},
  author={Vonessen, Carlos and Gr{\"o}tschla, Florian and Wattenhofer, Roger},
  journal={arXiv preprint arXiv:2406.15852},
  year={2024}
}

@article{finder2025improving,
  title={Improving the Effective Receptive Field of Message-Passing Neural Networks},
  author={Finder, Shahaf E and Weber, Ron Shapira and Eliasof, Moshe and Freifeld, Oren and Treister, Eran},
  journal={arXiv preprint arXiv:2505.23185},
  year={2025}
}

@article{charikar2017approximate,
  title={Approximate hierarchical clustering via sparsest cut and spreading metrics},
  author={Charikar, Moses and Chatziafratis, Vaggos},
  journal={Proceedings of the Twenty-Eighth Annual ACM-SIAM Symposium on Discrete Algorithms},
  pages={841--854},
  year={2017},
  organization={SIAM}
}

@article{manghiuc2021hierarchical,
  title={Hierarchical Clustering: $ O (1) $-Approximation for Well-Clustered Graphs},
  author={Manghiuc, Bogdan-Adrian and Sun, He},
  journal={advances in neural information processing systems},
  volume={34},
  pages={9278--9289},
  year={2021}
}

@article{moseley2023approximation,
  title={Approximation bounds for hierarchical clustering: Average linkage, bisecting k-means, and local search},
  author={Moseley, Benjamin and Wang, Joshua R},
  journal={Journal of Machine Learning Research},
  volume={24},
  number={1},
  pages={1--36},
  year={2023}
}

@article{bateni2017affinity,
  title={Affinity clustering: Hierarchical clustering at scale},
  author={Bateni, MohammadHossein and Behnezhad, Soheil and Derakhshan, Mahsa and Hajiaghayi, MohammadTaghi and Kiveris, Raimondas and Lattanzi, Silvio and Mirrokni, Vahab},
  journal={Advances in Neural Information Processing Systems},
  volume={30},
  year={2017}
}
\vspace*{-3cm}
\begin{IEEEbiography}
[{\includegraphics[width=0.9in,height=1.25in,clip,keepaspectratio]{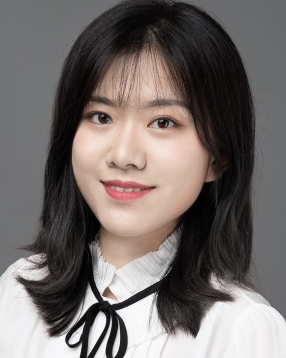}}]{Fuli Wang} received her M.Sc. degree in Statistics from the University of Minnesota in 2020, and her Ph.D degree in Financial Services Analytics from Institute of Financial Services at the University of Delaware in 2024. Her research interests include hypergraph neural networks, hypergraph signal processing, and their applications in finance and health. 
\end{IEEEbiography}

\newpage
\begin{IEEEbiography}[{\includegraphics[width=0.9in,height=1.25in,clip,keepaspectratio]{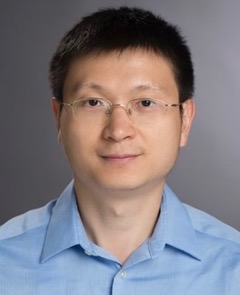}}]{Wei Qian} received his Ph.D. degree in Statistics from the University of
Minnesota, Minneapolis, MN, in 2014. He joined the School of Mathematics and
Statistics at Rochester Institute of Technology as an assistant professor in the
same year, and then moved to the Department of Applied Economics and
Statistics at the University of Delaware in 2017, where he has been an associate
professor since 2021. His research interests include high-dimensional statistics, model selection, dimension reduction, statistical computing, deep learning,
reinforcement learning, and data science applications. He also serves as a JPMC
Fellow and affiliated member of the Institute of Financial Services at the
University of Delaware.
\end{IEEEbiography}

\vspace*{-10cm}
\begin{IEEEbiography}[{\includegraphics[width=0.9in,height=1.25in,clip,keepaspectratio]{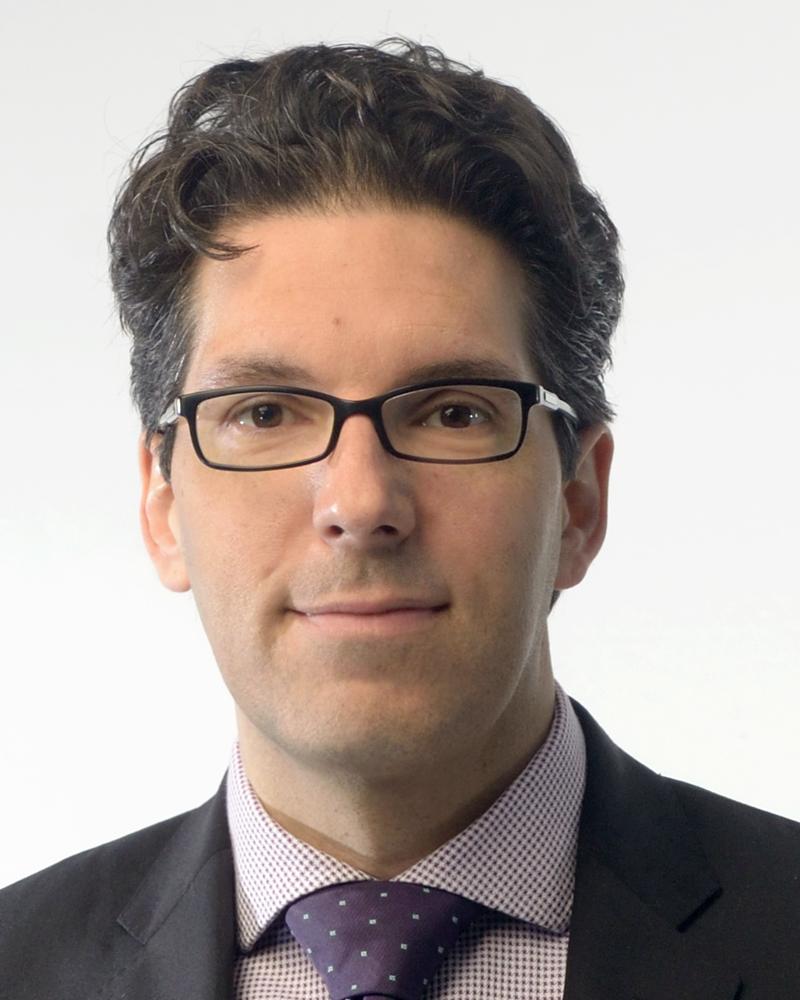}}]{Daniel L. Lau (Fellow, IEEE)} received the B.Sc. degree (with highest distinction) in electrical engineering from Purdue University, West Lafayette, IN, USA, and the Ph.D. degree from the University of Delaware, Newark, DE, USA, in 1995 and 1999, respectively. He is currently a Professor and the Director of Graduate Studies at the University of Kentucky's Department of Electrical and Computer Engineering, Lexington, KY, USA. His research interests include image and signal processing, 3D imaging, and machine vision. His work has also been featured in trade magazines such as Imaging Insight, Prosilica Camera News, and Inspect Magazine.
\end{IEEEbiography}

\vspace*{-10cm}
\begin{IEEEbiography}[{\includegraphics[width=0.9in,height=1.25in,clip,keepaspectratio]{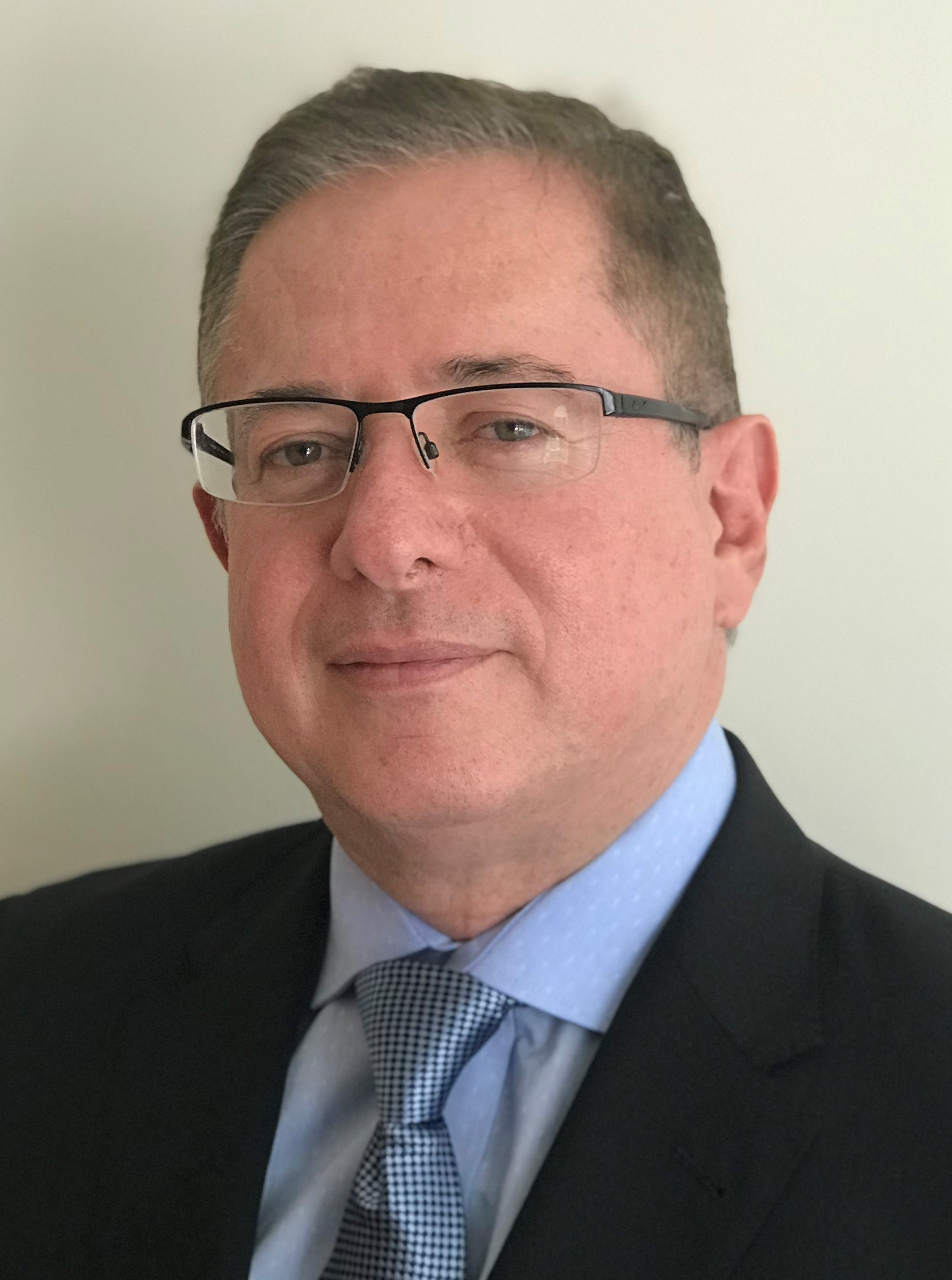}}]{Gonzalo R. Arce (Life Fellow, IEEE)}
	is currently the Charles Black Evans Distinguished Professor of Electrical and Computer engineering, and a J.P. Morgan-Chase Senior Faculty Fellow in the Institute of Financial Services Analytics at the University of Delaware, Newark, DE, USA. He holds 25 U.S. patents and is the co-author of four books. His research interests include computational imaging, data science, and machine learning. He held the 2010 and 2017 Fulbright-Nokia Distinguished Chair of Information and Communications Technologies with Aalto University, Espoo, Finland. He received the NSF Research Initiation Award. He was elected Fellow of the IEEE, OPTICA, SPIE, AAIA, and the National Academy of Inventors (NAI).
\end{IEEEbiography}

\end{document}